\newif\iftaclinstructions
\newcommand{\instr}
\newcommand{\ch}[1]{\begin{CJK}{UTF8}{bsmi} #1 \end{CJK}}
\newcommand{\setlen}{\setlength{\abovedisplayskip}{3pt} 
\renewcommand{\TPTminimum}{1.03\linewidth}
\setlength{\belowdisplayskip}{3pt}}
\newtheorem{theorem}{Proposition}
\title{Phonetic Reconstruction of the Consonant System of Middle Chinese via Mixed Integer Optimization}
\author{
  Xiaoxi Luo 
  \\
  Yuanpei College, Peking University
  \\
  \texttt{lxx\_1900017744@pku.edu.cn}
  \And
  Weiwei Sun
  \\
  Department of Computer Science and\\ Technology,
  Cambridge University
  \\
  \texttt{ws390@cam.ac.uk}
}
\newcommand{\qy}{Qi\=ey\`un}
\newcommand{\fq}{F\v anqi\=e}
\newcommand{\gy}{Gu\v angy\`un}
\newcommand{\zhiyin}{Zh\'iy\textipa{\=\i}n}
\begin{document}
\begin{CJK}{UTF8}{bsmi}

\maketitle 
% Phonetic Reconstruction of the Consonant System of Middle Chinese via Mixed Integer Optimisation

\begin{abstract}

This paper is concerned with phonetic reconstruction of the consonant system of Middle Chinese.
We propose to cast the problem as a Mixed Integer Programming problem, which is able to automatically explore homophonic information from ancient rhyme dictionaries and phonetic information from modern Chinese dialects, the descendants of Middle Chinese. 
Numerical evaluation on a wide range of synthetic and real data demonstrates the effectiveness and robustness of the new method.
We apply the method to information from \gy~and 20 modern Chinese dialects to obtain a new phonetic reconstruction result.
A linguistically-motivated discussion of this result is also provided.\footnote{Code and datasets are available at \url{https://github.com/LuoXiaoxi-cxq/Reconstruction-of-Middle-Chinese-via-Mixed-Integer-Optimization}.}

\end{abstract}

\section{Introduction}
Phonological reconstruction is one main concern in historical linguistics. 
There are two fundamental goals: reconstructing phonological categories and reconstructing phonetic values of these categories or individual phonemes. 
The classic linguistic and philological approach applies a comparative strategy to solve both problems by connecting cognates in different languages.
Previous research in computational linguistics demonstrates the possibility to automate the comparative approach to some extent.
See e.g. \citeauthor{bouchard-2007a} (\citeyear{bouchard-2007a}, \citeyear{bouchard-2009}, \citeyear{bouchard-2013}), \citet{list-etal-2022-new}, and \citet{he-etal-2023-neural}, among others.

The comparative approach developed out of attempts to reconstruct Proto-Indo-European, the common ancestor of the Indo-European language family.
However, the comparative method itself is not well equipped to handle the special challenges of reconstructing Chinese. 
On the one hand, a key step in the comparative method is to identify as many cognates as possible, which is relatively straightforward for Chinese but can be extremely challenging in other languages.
On the other hand, documentary materials predominantly use Chinese characters to annotate other characters (such as \fq~反切), a tradition that has continued for thousands of years. 
For example, rhyme dictionaries such as \qy~切韻 extensively use this unique annotation method and systematically represent the phonological system of Chinese during a specific period. 
Such precious materials are relatively rare in other languages. 
This information is invaluable for the reconstruction of proto-languages, but the comparative method itself cannot adequately handle it. 
In fact, throughout Chinese history, numerous works similar to the \qy~have existed in different periods, each reflecting the phonological system of its time.
The purpose of this paper is to address the question of how to systematically utilize these phonetic materials.
    
In the practice of phonological reconstruction for ancient Chinese, linguists have been overwhelmingly exploring alternative information to spelling and hence alternative methods to the comparative one \citep[pp.1--2]{huang2014handbook}.
Their work heavily relies on philological documents, especially rhyme dictionaries, which have a unique way to record homophonic information, i.e. \fq. 
A basic consensus on the phonological categories of Middle Chinese (MC)\footnote{There are three basic periods: Old Chinese, Middle Chinese and Old Mandarin \citep{wangli-1957}.} has been reached---there were 35--38 initials in MC, with minor disagreement only on some categories' merging or splitting \citep{gbh,lr-1956,wangli-1957}.
Ancient rhyme dictionaries, however, do not provide phonetic information, and phonetic reconstruction is still extremely challenging.
The relevant research is limited and there is a lot of disagreement among scholars. 
Figure \ref{fig:confu-matrix} shows the percentage of initials with which scholars disagree. 
There is significant inconsistency between any two scholars, let alone a consensus among all of them.

\begin{figure}[t]
    \centering
    \includegraphics[width=0.99\linewidth]{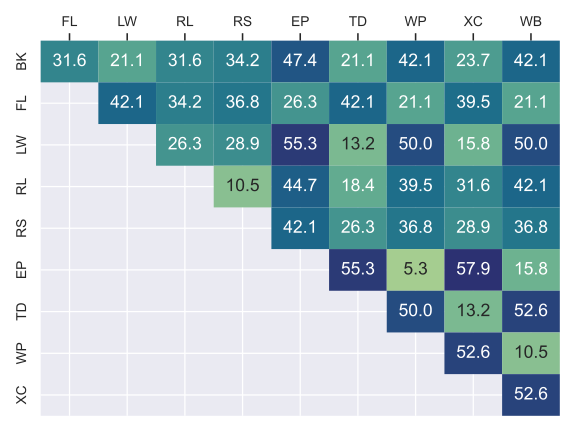}
    \caption{Disagreement among scholars on phonetic reconstruction. BK: \citet{gbh}, FL: \citet{lfk-1971}, 
    LW: \citet{wangli-1957}, RL: \citet{lr-1956}, RS: \citet{shaorongfen}, EP: \citet{pulleyblanks}, TD: \citet{dth-2004}, WP: \citet{panwuyun-2000}, XC: \citet{chenxinxiong}, WB: \citet{Baxter1992}.\footnote{The original data comes from \url{https://zh.wikipedia.org/wiki/\%E4\%B8\%AD\%E5\%8F\%A4\%E9\%9F\%B3}.} The number in each cell represents the percentage of initials on which the corresponding two scholars disagree. 
}
    \label{fig:confu-matrix}
\end{figure}

This paper is concerned with developing a computational model for phonetic reconstruction of the consonant system of MC.
We propose to cast phonetic reconstruction as a Mixed Integer Optimisation problem (\S\ref{sec:model}). 
A particular goal is to conveniently integrate heterogeneous information. 
We consider two major information sources: 
(1) philological documents and 
(2) modern Chinese dialects\footnote{The modern Chinese dialects are more like a family of languages \citep{huang2014handbook}, and many of them are not mutually intelligible. 
This paper uses the term `dialect' instead of `variety', because we focus on their common ancestor MC.}, the descendants of MC.
Following Generative Phonology \cite{chomsky-1968},
we introduce a novel compact set of Chinese-specific distinctive features to represent consonants (\S\ref{sec:feature-set}).
Based on the feature-oriented precise phonetic representation, we formalise 
the optimisation goal as minimising the overall distance between possible homophonic characters and the overall distance between MC and modern dialects.
Measuring the distance is a key element to the success of the new architecture.
To this end, we design a new mathematically sound distance/metric function to suit our feature representation.

Evaluating the {goodness} of reconstruction result is uniquely challenging because of the lack of ground-truth.
Instead, we evaluate the reconstruction method.
We consider two types of experiments: experiments on synthetic data (\S\ref{sec:synthesis}), where the ground-truth is known, and experiments with held-out data (\S\ref{sec:experiment}), where partial information transformed from the ground-truth is known.
To create representative synthetic data, we start from a pre-defined consonant system, derive homophonic information that matches \fq, and derive varieties by introducing stochastic change as well as random noise. 
We consider three types of consonant systems: 1) purely artificial systems that randomly select elements, e.g. from an IPA chart, 
2) natural systems of modern languages, including English, German and Mandarin, and 3) the reconstructed system of Latin.
Numerical evaluation demonstrates the effectiveness and robustness of the new method.
It is able to successfully reconstruct most consonants when natural and reconstructed consonant systems are considered. 
 
For the experiments with real data, we consider a wide range of representative Chinese characters with relevant information from \gy~and 20 modern dialects.
Given the absence of ground truth in phonetic reconstruction, 
to validate the effectiveness of the reconstruction method, we employ the strategy to hold out some \fq~information.
In particular, we apply our method to 70\% \fq~annotations and compare the automatically reconstructed result with the other 30\%.
The reconstructed phonemes predict around 68\% \fq.
Considering that \fq~annotations themselves are not fully consistent,
the result is quite promising and the method has a potential use to detect inconsistent \fq~annotations.

Based on the entire real data set, we provide a new phonetic reconstruction for Middle Chinese.
We present both numerical and linguistic comparison to previous philologist work (\S\ref{sec:main-result}).
Our phonetic reconstruction aligns to the well-studied phonological category reconstruction to a great extent---it obtains an Adjusted Mutual Information \citep{AMI-2010} score of over 0.8. 
A linguistic analysis of the reconstruction result suggests some future research venues. 

\section{Linguistic and Philological Basis} \label{ling-philo-basis}
%Our approach to reconstruction is based on the language-specific properties of Chinese syllable (\S \ref{syllable-stru}). Based on the syllable structure, F\v anqi\=e反切 notations in rhyme dictionaries (\S \ref{rhyme-dict}) provide valuable homophonic information of MC. Modern dialects are another information source, which are briefly introduced in \S\ref{intro-dialect}.

\subsection{Syllable Structure} \label{sec:syllable-structure}
%\footnote{This subsection is based on \citet{shen_2020}.} \label{syllable-stru}
%Chinese is a monosyllabic language, 

Ancient documents overwhelmingly indicate that Chinese was, from the beginning of its recorded history, a monosyllabic language,
% Chinese was always a monosyllabic language
 in which morphemes are by and large represented by single syllables \citep{norman1988,shen_2020}.
Moreover, the sound pattern of the syllabic structure remained unchanged from Middle Chinese to modern Mandarin.
The syllabic structure is composed of an initial segmental consonant (I), 
a medial (aka on-glide, denoted as M hereafter), a main vowel (V), a coda (or an off-glide), denoted by C hereafter, and a suprasegmental tone (T).
The terms `rime' and `final' are also frequently used: 
Rime is the combination of the main vowel and the coda, while final is a combination of the medial, the main vowel and the coda.
There are no consonant clusters, i.e. more than one consecutive consonants.
% i.e. more than one consecutive consonants in a row. 
%It consists of the initial (I), medial (M), main vowel (V), coda (C), and the suprasegmental tone (T) as the basic structural slots. 
See Figure \ref{fig:syllabic-structure} for the hierarchical organisation of the above elements. 
%Among these structural slots, only the main vowel is obligatory, and I, M and C can be zero. 
Below we list three examples:
\begin{itemize} \setlength\itemsep{0.01em}
  \item \ch{巔}/\text{[tian]}: I=t, M=i, V=a, C=n, T=55 
  \item \ch{眼}/\text{[ian]}: I=$\emptyset$, M=i, V=a, C=n, T=214
  \item \ch{暗}/\text{[an]}: I=$\emptyset$, M=$\emptyset$, V=a, C=n, T=51
\end{itemize}

%\begin{figure*}[h]
%        \centering
%        \subfigure[Syllable structure of \ch{巔} \text{[tian]}, with a high level tone (55).]{
%		\begin{minipage}[b]{0.3\textwidth}
%			\includegraphics[width=\textwidth]{figure/new-subfig1.png} 
%		\end{minipage}
%		\label{fig:subfig1}
%	}
%	\subfigure[Syllable structure of \ch{眼} \text{[ian]}, with a low dipping tone (214). The syllable has a zero initial.]{
%		\begin{minipage}[b]{0.3\textwidth}
%			\includegraphics[width=\textwidth]{figure/new-subfig2.png} 
%		\end{minipage}
%		\label{fig:subfig2}
%	}
%        \subfigure[Syllable structure of \ch{暗} \text{[an]}, with a high falling tone (51).  The syllable has a zero initial and a zero medial.]{
%            \begin{minipage}[b]{0.3\textwidth}
%            \includegraphics[width=\textwidth]{figure/new-subfig3.png}
%            \end{minipage}
%        \label{fig:subfig3}
%        }
%	\caption{Examples of Chinese syllable structure}
%	\label{fig:3exp-stru}
%\end{figure*}

\begin{figure}[hbtp]
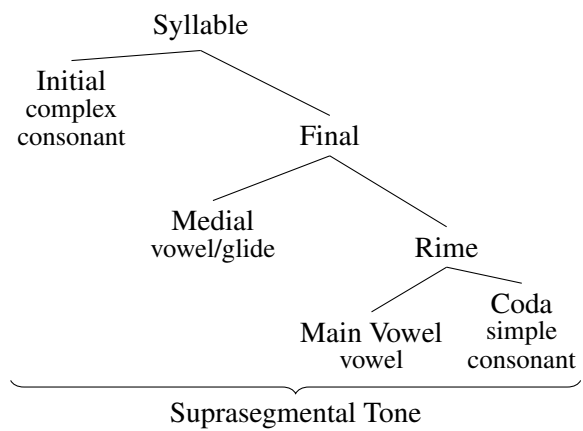

  \centering
  \tikz{
    \tikzset{level distance=42pt}
    \Tree
      [.Syllable
        \node(l){\shortstack{Initial\\\small{complex}\\\small{consonant}}};
        [.Final
          {\shortstack{Medial\\\small{vowel/glide}}}
          [.Rime
            {\shortstack{Main Vowel\\\small{vowel}}}
            \node(r){\shortstack{Coda\\\small{simple}\\\small{consonant}}};
          ] ] ] 
    \draw [decorate,decoration={brace,amplitude=5pt,mirror,raise=4ex}] (-2.5,-3.9) -- (5,-3.9) node[midway,yshift=-3em]{Suprasegmental Tone};
  }

  \caption{The syllabic structure of MC and Mandarin.}
  \label{fig:syllabic-structure}
\end{figure}

Consonants can only appear as I or C. %\footnote{Some scholars reconstructed some categories of medial as consonants. For example, \citet{gbh} and \citet{wangli-1957} reconstructed the medial of Rank-III as a glide. We do not consider medial as consonants here, since there still exists controversy.} 
Consonantal codas are rather simple and have been relatively clearly recorded in rhyme dictionaries. 
The reconstruction of the associated phonetic values is also clear: 6 categories in total, including nasals [m, n, \textipa{\ng}] and stops [p, t, k]). 
This paper aims to complete the reconstruction of the entire consonant system by systematically studying initials.

\subsection{\fq~Spelling}
\fq~is a traditional method to indicate the pronunciation of a character in question. 
%Chinese characters are logographic, so the phonetic values of characters can only be shown by comparison with other characters. 
In the \fq~spelling, two characters are selected as two spellers to represent the pronunciation of the character (denoted as $X$) in question: 
the first character ($X_u$) is called the upper speller and shares the same I with $X$; 
the second character ($X_l$) is called the lower speller and shares the same M, V, E, and T with $X$. 
Take Figure \ref{fig:exp-fanqie} for example. 
To partially record the pronunciation of 烘, 戶 is employed as the upper speller, while 公 is used as the lower speller.
\begin{figure}[hbtp]
    \centering
    \includegraphics[width=0.6\linewidth]{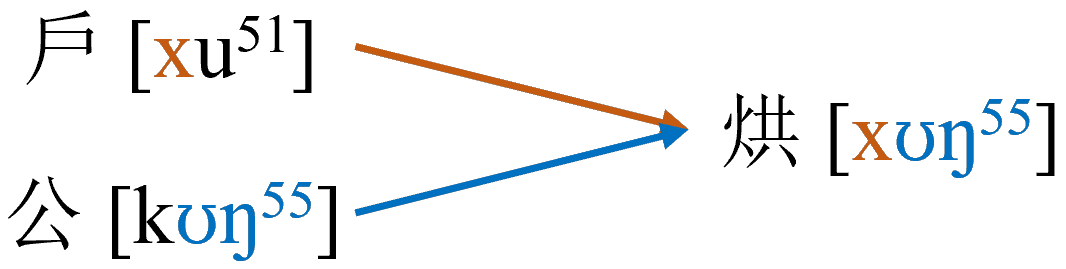}
    \caption{An example of the \fq~spelling. }
    \label{fig:exp-fanqie}
\end{figure}

\zhiyin~直音 is another method to partially annotate pronunciation. 
It uses a homophonic character to annotate the character in question. 
Both \fq~and \zhiyin~were frequently used. 

%\subsection{Rhyme Dictionary} \label{rhyme-dict}
A rhyme dictionary is a type of ancient Chinese dictionary that collates characters by tone and rime. 
In rhyme dictionaries, there are three types of important phonological information: rhyme categories, \fq~spellings, and \zhiyin~notations. 
The \qy~is a renowned rhyme dictionary that encapsulates the phonology of MC. 
Chinese philologists have been working on it to derive phonological analysis for centuries.

\subsection{Modern Chinese Dialects} %\footnote{This subsection is based on \citet{norman1988}. See \S8--9 for details.}} 
\label{intro-dialect}
Modern Chinese dialects are classified into seven groups in three geographic zones: 
Mandarin (northern zone), Wu, Min, Xiang (central zone), Gan, Hakka and Yue (southern zone) \citep{norman1988,dialectology}.
% They are all descendants of MC,
Most scholars believe that they are all descendants of MC, 
and therefore provide valuable information to reconstruct phonetic values. 
% For example, the Southern dialects rather than the northern and central dialects retain all final consonants in Middle Chinese.\todo{now we have more space to say more about dialects.}

\begin{comment}
    
\end{comment}

\begin{table*}[!t]
  \centering
    \scalebox{0.95}{
    \begin{tabular}{@{}c@{}cl}
    \toprule
    \multicolumn{1}{r}{} & \textbf{Name} & \multicolumn{1}{c}{\textbf{Value}} \\
    \midrule
    \multicolumn{1}{c}{\multirow{4}[6]{*}{\shortstack{Manner \\ Feature}}} & sonority & $5$: vowel, $4$: glide, $3$: liquid, $2$: nasal, $1$: obstruent \\
    \cmidrule{2-3}      & \multirow{2}[2]{*}{continuant} & $1$: fricatives, liquids, glides, laterals \\
          &       & $-1$: stops, affricates, nasals \\
    \cmidrule{2-3}      & {delayed release$^{\text{1}}$\tnote{1}} & $1$: fricatives, affricates, $-1$: stops \\
    \midrule
    \multicolumn{1}{c}{\multirow{11}[18]{*}{\shortstack{Place \\ Feature}}} & labial &  $1$: articulated with the lips \\
    \cmidrule{2-3}      & labiodental$^{\text{2}}$\tnote{2} & $1$: articulated by touching the lower lip to the upper teeth \\
    \cmidrule{2-3}      & coronal & $1$: articulated with the tongue blade and/or tip \\
    \cmidrule{2-3}      & anterior$^{\text{3}}$\tnote{3} & $1$: (front) dental, alveolar, $-1$: (post) palato-alveolar, retroflex\\
    \cmidrule{2-3}      & \multirow{2}[2]{*}{distributed$^{\text{3}}$\tnote{3}} & 1: (blade, laminal) dental, palato-alveolar \\
          &       & $-1$: (tip, apical) alveolar, retroflex \\
    \cmidrule{2-3}      & \multirow{2}[2]{*}{lateral} & $1$: distinguishes [l] from other coronal liquids and [\textipa{\textbeltl}, l\textipa{\textyogh}]  \\
          &       & from other coronal fricatives. \\
    \cmidrule{2-3}      & dorsal & $1$: articulated with the tongue body \\
    \cmidrule{2-3}      & high$^{\text{4}}$\tnote{4} & $3$: velar, $2$: uvular, $1$: pharyngeal \\
    \cmidrule{2-3}      & front$^{\text{4}}$\tnote{4} & $3$: fronted velar, $2$: central velar, $1$: back velar, uvular, pharyngeal \\
    \midrule
    \multicolumn{1}{c}{\multirow{2}[4]{*}{\shortstack{Laryngeal \\ Feature}}} & voice & $1$: voiced, $-1$: voiceless \\
    \cmidrule{2-3}      & spread glottis & $1$: [h], breathy vowels, and aspirated consonants. \\
    \bottomrule
  \end{tabular}
  }
  \caption{\label{table:feature-set}Our feature set.
    In the `Value' column, the number before the colon is the possible value of the feature, while the right of the colon is the condition for taking this value. 
    If there is only value `$1$', it means that the feature is `$-1$' under all other circumstances. 
    $^1$Only meaningful for obstruents (i.e., when sonority is $1$).
    $^2$Only meaningful when [+labial].
    $^3$Only meaningful when [+coronal].
    $^4$Only meaningful when [+dorsal].
    } 
\end{table*}

\section{Representing Phonemes} \label{sec:feature-set}
Representing phonemes in a formal way plays an essential role in computational reconstruction.
Following Generative Phonology \cite{chomsky-1968}, we use distinctive features to represent phonemes. 
%We tailor the feature set proposed by 
\citet{hayes2011} proposes a feature set for all human languages.
Any specific language only uses a subset of it to mark phonemic contrasts. 
To compactly represent Chinese and its modern varieties, we propose a Chinese-specific set. 
Reducing the total number of distinctive features can also boost the efficiency in solving the corresponding optimisation problem. %\todo{coherence}

\subsection{Distinctive Features} \label{intro-gp}

Distinctive features provide a systematic way to identify and represent phonemes. 
Each phoneme is represented as and collectively defined by a bundle of binary features \citep[p.71]{hayes2011}. 
The negative ($-$) and the positive ($+$) annotations are used to indicate the absence or presence of a feature. 
Below is an example:
%For example, the name \textit{Pom} (= /p\textipa{A}m/) can be represented as follows:
%\footnote{This example comes from \citet[pp.72]{hayes2011}.} 
\begin{equation*} \label{pom}
\small
\text {Pom:}  \! =  \! \left[ \! \begin{array}{l}
- \text { syllabic} \\
- \text { sonorant} \\
+ \text { stop} \\
- \text { nasal} \\
+ \text { labial} \\
- \text { voice}
\end{array}  \! \! \right]  \! 
\left[  \! \begin{array}{l}
+ \text { syllabic} \\
+ \text { sonorant} \\
- \text { stop} \\
- \text { nasal} \\
+ \text { low} \\
+ \text { back} \\
- \text { round}
\end{array}  \! \! \right]  \! 
\left[ \! \begin{array}{l}
- \text { syllabic} \\
+ \text { sonorant} \\
+ \text { stop} \\
+ \text { nasal} \\
+ \text { labial} \\
+ \text { voice}
\end{array} \! \! \right] \end{equation*}

It is straightforward to formalise the bundle of features as a vector, which can be used to measure the distance between phonemes.

\subsection{Our Feature Set} \label{our-feature-set}

We propose the following modification of \citet{hayes2011} to obtain a feature set for Chinese.

\paragraph{Remove some features} 
Two types of features are not considered: (1) features that can be represented by other features, and (2) `tap' and `trill'\footnote{Taps, flaps and trills are uncommon in modern Chinese dialects \citep{zhuxiaonong-liquid}, and they do not appear in our dataset. 
No scholars have used taps, flaps and trills to reconstruct MC.
Existing research shows their close connection with the affix \'er 儿~\citep{trill-in-HB-2019}, but there is still no consensus on the timing and process of their formation.}. 
% , which are highly unlikely used by ancient nor modern Chinese
%, e.g. [+strident] phonemes are coronal fricatives and affricates, 

\paragraph{Merge some features}
The features in Generative Phonology are binary. 
By combining comparable and orderable features into multi-valued ones, we can reduce the number of features. 
For example, \citet{hayes2011} use 4 features (syllabic, consonantal, approximant, sonorant) to describe the sonority hierarchy, while we combine them into one feature, `sonority', with 5 graduable values. 
%\todo{The integration of these features is justified because they are comparable?}

Our feature set is summarised in Table \ref{table:feature-set}. 
We have 14 features in total, reduced from 25 in \citet{hayes2011}. 
Accordingly, we use a 14-dimensional vector to represent a phoneme for computation. 

\paragraph{Independent vs dependent features}
In both the \citeauthor{hayes2011}' feature set and ours, some features are meaningful\footnote{The words `meaningful' and `meaningless' used here correspond to the description `not to care' in \citet[p.91]{hayes2011}: `in most languages with plain /p/, the position of the tongue body during the production of this sound is simply whatever is most articulatorily convenient, given the neighboring sounds. $\cdots$ the tongue body does not adopt any particular position during the /p/; $\cdots$ In this sense, the /p/ could be said truly `not to care' about values for dorsal features.'}
only when some other features at higher levels take certain values. 
We refer to features that decide whether other features are meaningful as `I-features' (independent feature), 
  and those determined by I-features as `D-features' (dependent feature).  

\paragraph{Zero value} \citet{hayes2011} uses the digit 0 (zero feature) to represent meaningless D-features. 
Some syllables in Chinese lack initial consonants, and \citet[pp.18--23]{chao1968} suggests to call them `zero initials'. 
Accordingly, we use digit 0 to represent zero initials.
0-valued I-features occur when and only when the corresponding character is initialless, 
  while 0-valued D-variables occur when and only when they are meaningless.

\newcommand{\fmc}{$F_{\text{MC}}(X)$}
\newcommand{\fmcu}{$F_{\text{MC}}(X_u)$}

\begin{table*}[th] 
  \centering
  \begin{threeparttable}
    % \begin{tabular}{p{4em}p{21.375em}}
    \scalebox{0.95}{
    \begin{tabular}{|ll|}
    %\toprule
    %{\textbf{Notation}} & \textbf{Meaning}\\
    %\midrule
    \hline
    {$X$} & character of which the initial is to be reconstructed\\
    {$X_u$} & upper speller of character $X$, with its initial to be reconstructed \\
    {$S_{fq}$} & set of all character--speller pairs $(X, X_u)$ \\
    {$L$/$l$} & set of modern dialects/a modern dialect \\
    {$F$} & 14-dimensional phonetic feature vector \\
    {$F^j$} & $j$-th dimension of phonetic feature vector $F$ \\
    {$F_l(X)$} & phonetic feature vector that encodes $X$'s initial in dialect $l$ (known) \\
    \fmc &  phonetic feature vector that encodes $X$'s initial in MC (to be solved) \\
    {$S_I$/$S_D$} & set of independent/dependent features \\
    {$\tau$} & function that maps D-feature $j$ to the corresponding I-feature $\tau(j)$\\
    {$d(F_1,F_2)$} & distance between feature vectors $F_1$ and $F_2$\\
    {$f$} & general distance function, e.g. $p$-norm\\
    % {$g$} & distance function defined in \S\ref{sec:dis-f} \\
    {$g_{j,\tau(j)}(F_1, F_2)$} & distance function between $F_1$ and $F_2$ according to D-feature $j$ and I-feature $\tau(j)$\\\hline
    %\bottomrule
    \end{tabular}%
    }
    \caption{\label{table:notation}A summary of mathematical notations used to illustrate our model.}
    \end{threeparttable}
\end{table*}

\section{The Optimisation Model} \label{sec:model}

%\subsection{Mixed Integer Programming} \label{sec:mip}
Mixed Integer Programming (MIP) is an optimization problem in which some but not necessarily all variables are constrained to be integers.
%The integrality constraints in MIP models can be used to decide whether or not some action is taken. 
% MIP models with a quadratic objective but without quadratic constraints are called Mixed Integer Quadratic Programming (MIQP) problems. MIP models with quadratic constraints are called Mixed Integer Quadratically Constrained Programming (MIQCP) problems.
MIP has been widely applied in many NLP tasks, e.g. dependency parsing \citep{riedel-clarke-2006-ILP-parsing}, semantic role labeling \citep{ILP-semantic-role}, coreference resolution \citep{de-belder-moens-2012-coreference}, as well as some more recent applications, e.g. exemplar selection for in-context learning. \citep{tonglet-etal-2023-seer}

We introduce our MIP model for phonetic reconstruction as follows. 
The objective function and constraints are detailed in \S\ref{sec:obj} and \S\ref{sec:restriction} separately. 
An essential component of the objective function is measuring the distance between two phonetic feature vectors, for which we propose a mathematically-sound distance function in \S\ref{sec:dis-f}.
Mathematical notations used in \S\ref{sec:obj}--\S\ref{sec:restriction} are summarised in Table \ref{table:notation}.

\subsection{The Objective Function} \label{sec:obj}

To phonetically reconstruct MC, we consider two information sources: \fq/\zhiyin~and varieties. 
\fq/\zhiyin~reveals homophonic relationships between characters of MC, 
and each descendent dialect partially reflects MC's phonetic structure. 
%A credible reconstruction should satisfy categorical relationships as well as phonetic coherence as much as possible.\todo{这一句不清楚}
Formally, assume we have a set of characters under consideration, denoted as $S$. 
The construction of $S$ is discussed in \S\ref{sec:dataset}.
Each character $X\in S$ has at least one upper \fq~or \zhiyin~speller, denoted as $X_{u}\in S$. 
We collect all character--speller pairs and define the set $S_{\text{fq}}=\{(X, X_{u}): X \in S\}$.
Let $L$ denote the set of modern dialects.
The pronunciation of any character $X\in S$ in any dialect $l\in L$ is known.
Accordingly, the phonetic feature vector of $X$'s initial in $l$, denoted as $F_l(X)$, is known.
The goal is to infer its phonetic feature vector of MC, denoted as \fmc, based on $S_{\text{fq}}$ and all known $F_{l}(X)$ where $l\in L$.

%$\Sigma$ the set of IPA phonemes, and $X_d(d\in D)$ the pronunciation of $x$ in dialect $d$, belonging to the set $\Sigma^*$. Our ultimate goal is to deduce the IPA transcription of $x$ in MC, using $\{X_d: d\in D, X \in S\}$ and $
%is modeled as minimizing the sum ($\mathcal{L}$) of the `distance' between F\v anqi\=e notations ($\mathcal{L}_{fq}$) and the `distance' between MC and each dialect ($\mathcal{L}_{d}$). 

We cast the goal as \textbf{minimising} the overall \textit{distance} between \fmc~and \fmcu,
and minimising the overall \textit{distance} between \fmc~and $F_l(X)$, for all $X\in S$.
Assume $d$ is a mathematically sound distance/metric function and $\lambda_{\text{fq}}\in(0,1)$ is a coefficient then the objective is
\begin{equation}\label{eq:objective}
  \begin{aligned}
    & \lambda_{\text{fq}} \sum_{(X,X_u) \in S_{\text{fq}}} 
       d(F_{\text{MC}}(X), F_{\text{MC}}(X_u))\\
    & \quad+ (1-\lambda_{\text{fq}}) \sum_{l\in L, X \in S} d(F_{\text{MC}}(X), F_l(X))
  \end{aligned}
\end{equation}
Although the speller $X_{u}$ is supposed to share the same initial with $X$ in general, 
we should not model such homophonic relation with constraint $F_{\text{MC}}(X)=F_{\text{MC}}(X_u)$ due to the existence of a considerable number of counterexamples.
Such inconsistency exists probably because the \fq/\zhiyin~spellings were not devised by one individual but rather collected from various preexisting phonological works, and therefore encoded phonological information of a mixture of diachronically connected languages \citep{shen_2020}.
Instead, we relax the identity restriction by employing a more general distance notion.

%In this section, we delve into the task of phonetic reconstruction. We reconstruct the phonetic value of each character's initial with dataset $\{X_d: d\in D, X \in S\}$ and $\{(X, X_{u}): X \in S\}$ (defined in \S\ref{final-dataset}) by minimizing $\mathcal{L}$, the objective function.
% Formally, $\mathcal{L}$ is the objective function to be minimized. The two parts of $\mathcal{L}$, $\mathcal{L}_{fq}$ and $\mathcal{L}_{d}$ are discussed in \S\ref{exp:fq} and \S\ref{exp:dialect} respectively. 

\subsection{The Distance Function} \label{sec:dis-f}

For each $X \in S$, we set 14 \textbf{continuous} variables $F^j (0\leqslant j\leqslant 13)$ to encode the phonetic value of its initial, 
each dimension corresponding to a feature. 
The range of $F^j$ is $[\min\{0,l_j\}, u_j]$, where $l_j$ and $u_j$ are the upper and lower bounds of its corresponding feature in Table \ref{table:feature-set}.
Usually, we can use $p$-norm to measure the distance between two real vectors.
However, in our problem, some features are not independent from each others --- it is meaningless to discuss a D-feature if its corresponding 
I-feature does not take a particular value.
%Similar to \ref{our-feature-set}, we employ the notations `I-variable' and `D-variable' to denote the sets of I-variables and D-variables as $S_I$ and $S_D$ respectively. 
%Now, we define the distance between two initials vectors, which is the foundation of our model. 
To solve this problem, we design a new distance function. 
The mathematical proof of its soundness is provided in Appendix A.

In our solution, the distance wrt.\ I-features is characterised by a general distance function $f$, e.g. $p$-norm.
We only consider the special case of D-features. 
We define $\tau$ as a function that maps each D-feature to its corresponding I-feature, e.g. maps `labiodental' to `labial'. Consider $F_1$ and $F_2$, two feature vectors to be compared. 
% Assume $F_1^j$, $F_2^j$ are two D-features, and $F_1^{\tau(j)}$, $F_2^{\tau(j)}$ are the corresponding I-features.
Assume $j \in S_D$ is a D-feature, and $\tau(j) \in S_I$ is the corresponding I-feature.
\begin{equation}
    \setlen s_j\xlongequal{\text{def}}\sup\limits_{F_1,F_2\in\Omega}f(F_1^{j},F_2^{j})
\end{equation}
Denote the set of all valid feature vectors as $\Omega$, which is a subset of $\mathbb{R}^{14}$. We define a function $g_{j,\tau(j)}:\Omega \mapsto\mathbb{R}$ as follows:
\begin{equation}
\label{distance-func}
    \setlen
  g_{j,\tau(j)}(F_1, F_2)=c\cdot s_j+(1-c)f(F_1^{j},F_2^{j})
\end{equation}
where $c=\min\{f(F_1^{\tau(j)},F_2^{\tau(j)}),1\}$.
The intuition of the design of $g_{j,\tau(j)}$ is as follows. 
It is reasonable to compare $F_1^j$ and $F_2^j$ with a normal distance $f$, when the corresponding I-features $F_1^{\tau(j)}$ and $F_2^{\tau(j)}$ are equal (or very near, since they are continuous).
Otherwise, the distance between $F_1^j$ and $F_2^j$ should correspond to the maximum possible distance they can reach.

Now we are ready to define
\begin{equation} \label{eq:dist-func}
    d(F_1, F_2)=\sum_{k \in S_I}f(F_1^k, F_2^k)+\sum_{j \in S_D}g_{j, \tau(j)}(F_1, F_2)
\end{equation}

\subsection{The Restrictions} \label{sec:restriction}
To obtain a proper phonetic feature vector, we need to ensure the values of its D-features to be consistent with its corresponding I-features. 
When a D-feature is meaningless wrt.\ its I-feature, we force the D-feature's value to be near 0 by some mathematical tricks. 
% For any D-variable $F^j$, denote its corresponding I-feature as $F^{\tau(j)}$.
Three cases are consider separately. 

\paragraph{Case I: delayed release} Unless the corresponding I-feature \textit{sonority} is around $1$, the value of the \textit{delayed release} feature is meaningless and thus should be around $0$. 
Therefore, the following constraint is considered:
\begin{equation} \label{constr-begin}
    \setlen
    F^{j} \leqslant \max(0, \min(F^{\tau(j)}, 2-F^{\tau(j)}))
    \end{equation}

\paragraph{Case II: high or front} Unless the corresponding I-feature \textit{dorsal} is around $1$, the value of a \textit{high} or \textit{front} feature should be around $0$. Ideally, the following constraints are satisfied:
    \begin{align}
        \setlen
        F^j \geqslant 1 \ &(\text{if }F^{\tau(j)} >0.5) \\
        F^j = 0   \ &(\text{if }F^{\tau(j)} \leqslant 0.5)
    \end{align}
    
To linearise, we define auxiliary variables $b$ (binary, indicator of whether $F^{\tau(j)}$ is larger than 0.5), $M$ (large enough), $\epsilon$ (small enough). We have:
\begin{align}\label{eq:highfront}
\setlen
    F^{\tau(j)} & \geqslant 0.5 + \epsilon - M \cdot (1 - b)\\
    F^{\tau(j)} & \leqslant 0.5 + M \cdot b\\
    \max & (0, 1-F^j)=1-b \label{M-trick}
\end{align}

\paragraph{Case III: Other D-features} When the value of the corresponding I-feature is around $1$, the absolute value of a D-feature $F^j$ should be around $1$.
Otherwise the absolute value should be close to $0$. 
We apply the same linearising trick, with only (\ref{M-trick}) changed into:
\begin{equation} \label{constr-end}
    \setlen
    |F^j|=b
\end{equation}

To sum up, our model is to minimise Eq. (\ref{eq:objective}) subject to constraints characterised by Eq. (\ref{constr-begin})--Eq. (\ref{constr-end}).

\definecolor{black}{RGB}{23,44,81} % almost black
\definecolor{dark_blue}{RGB}{143,170,220} % dark blue
\definecolor{light_blue}{RGB}{218,227,243} % light blue
\definecolor{lighter_blue}{RGB}{222,235,247} % lighter
\definecolor{orange_yellow}{RGB}{255,192,0} % orange yellow
\definecolor{darker_blue}{RGB}{68,114,196} % darker
\definecolor{orange_red}{RGB}{255,90,17} % orange red

\begin{figure*}
    \centering
    \begin{tikzpicture}
     \tikzstyle{every node}=[font=\footnotesize]
    % grid reference
     % \draw [step=0.5,help lines] (0,-1) grid (17,7);
    % step 1 
    % step 1 底框
    \draw[rounded corners] (-0.5,0.3) rectangle (6,4.3);
    \node[align=left,anchor=north west] at (-0.5, 4.3){\textbf{\normalsize Step 1: Simulating a consonant}\\\textbf{ \normalsize system}};
    % step 1 左框
    \draw  (-0.3,0.8) rectangle (0.7,2.8);
    \node[align=left,anchor=south west] at (-0.2,1.4) {IPA\\chart};
    
    % step 1 'random sample' arrow
     \draw [black, line width=0.1cm, arrows = {-Stealth[length=3mm]}]  (0.75,1.8) -- (1.9,1.8);
    \node[align=left,anchor=south west] at (0.65,1.8) {{\scriptsize randomly}\\{\scriptsize sample}};

    % step 1 右框
    \draw [rounded corners]  (1.9,0.4) rectangle (5.9,3.6);
    % \fill [rounded corners] [light_blue]  (1.9,0.2) rectangle (5.9,3.3);
    
    \node[align=left,anchor=north west] at (1.9,3.6) {\textbf{Selected}\\ \textbf{Initials}};
    \node[align=left,anchor=north west] at (3.3,3.6) {\textbf{Characters}};

    \node[align=left,anchor=north west] at (1.9,2.8) {$I_1$: dz\textipa{\super{w}}};
    \node[align=left,anchor=north west] at (1.9,2.4) {$I_2$: k\textipa{\super{h}}};
    \node[align=left,anchor=north west] at (1.9,2) {$I_3$: x};
    \node[align=center,anchor=north west] at (1.9,1.5) {$\cdots$};
    \node[align=left,anchor=north west] at (1.9,1.1) {$I_m$:\textipa{\textlambda}};

    \node[align=left,anchor=north west] at (3,2.8) {$C_{1_1}, C_{1_2}\cdots C_{1_{n_1}}$};
    \node[align=left,anchor=north west] at (3,2.4) {$C_{2_1}, C_{2_2}\cdots C_{2_{n_2}}$};
    \node[align=left,anchor=north west] at (3,2) {$C_{3_1}, C_{3_2}\cdots C_{3_{n_3}}$};
    \node[align=center,anchor=north west] at (3.3,1.5) {$\cdots$};
    \node[align=left,anchor=north west] at (3,1.1) {$C_{m_1},C_{m_2} \cdots C_{m_{n_m}}$};

    \draw [orange_yellow, line width=0.04cm]  (1.9,2.3) rectangle (5.7,2.8);

    % arrow connecting step1 & 2/3     
    \draw [black, line width=0.1cm, arrows = {-Latex[length=3mm]}]  (6,2.3) -- (6.5,2.3) -- (6.5,3.75) -- (7,3.75);
    \draw [black, line width=0.1cm, arrows = {-Latex[length=3mm]}] (6.5,2.3) -- (6.5,1.05) -- (7,1.05);

    % step 2 底框
    \draw [rounded corners] (7,2.5) rectangle (15,5);
    % \fill [rounded corners][dark_blue]  (7,2.5) rectangle (15,5);
    \node[align=left,anchor=north west] at (7, 5){\textbf{\normalsize Step 2: Deriving character-speller pairs}};
    % step2 左框
    \draw(7.2,2.6) rectangle (10.8,4.4);
    % \fill[light_blue] (7.2,2.6) rectangle (10.8,4.4);
    \node[align=left,anchor=north west] at (7.2, 4.4){\scriptsize (character, \textcolor{darker_blue}{speller})};
    \node[align=left,anchor=west] at (7.2, 3.4){$I_1$};
    % 花括号
    \draw [decorate, line width=0.04cm,
    decoration = {calligraphic brace, 
        raise=-1pt, % 离path的距离
        aspect=0.5, % 控制尖端的位置
        amplitude=3pt % 尖端的凸起高度
    }] (7.7, 2.9) -- (7.7, 3.9);
    
    % \node[rotate = 270, thin] at (7.7, 3.4){$\underbrace{\hspace{1cm}}$};
    \node[align=left,anchor=west] at (7.7, 3.8){\scriptsize $C_{1_i}$, \textcolor{darker_blue}{$C_{1_{i^{\prime}}}$}};
    \node[align=left,anchor=west] at (9, 3.8){w.p. $1-p_{\text{fq}}$};
    \node[align=left,anchor=west] at (7.7, 3.4){\scriptsize $C_{1_j}$, \textcolor{darker_blue}{$C_{k_{j^{\prime}}}$}};
    \node[align=left,anchor=west] at (9, 3.4){w.p. $p_{\text{fq}}$};
    \node[align=left,anchor=west] at (7.7, 3){\scriptsize $\cdots \quad \cdots$};
    \draw [orange_yellow, line width=0.04cm]  (7.3,2.7) rectangle (10.7,4);
    
    % step2 右框 相比左框所有x+=4
    \draw(11.2,2.6) rectangle (14.8,4.4);
    % \fill[light_blue] (11.2,2.6) rectangle (14.8,4.4);
    \node[align=left,anchor=north west] at (11.2, 4.4){\scriptsize (character, \textcolor{darker_blue}{speller})};
    \node[align=left,anchor=west] at (11.15, 3.4){$I_k$};
    % 花括号
    \draw [ decorate, line width=0.04cm,
    decoration = {calligraphic brace, 
        raise=-1pt, % 离path的距离
        aspect=0.5, % 控制尖端的位置
        amplitude=3pt % 尖端的凸起高度
    }] (11.7,2.9) -- (11.7, 3.9);
    
    \node[align=left,anchor=west] at (11.7, 3.8){\scriptsize $C_{k_i}$, \textcolor{darker_blue}{$C_{k_{i^{\prime}}}\quad $}};
    \node[align=left,anchor=west] at (13.1, 3.8){w.p. $1-p_{\text{fq}}$};
    \node[align=left,anchor=west] at (11.7, 3.4){\scriptsize $C_{k_j}$, \textcolor{darker_blue}{$C_{k_{j^{\prime}}}$}};
    \node[align=left,anchor=west] at (13.1, 3.4){w.p. $1-p_{\text{fq}}$};
    \node[align=left,anchor=west] at (11.7, 3){\scriptsize $\cdots \quad \cdots$};
    
    % between left and right
    \node[align=center,anchor=north] at (11,3.7) {$\mathbf{\cdots}$};
    \draw[black, arrows = {-Latex[scale=1]}, line width=0.04cm] (11.5,3.2) arc (360:182:1.4 and 0.6);

    % step 3 底框
    \draw [rounded corners] (7,-0.4) rectangle (15.5,2.5);
    % \fill [rounded corners][dark_blue]  (7,-0.4) rectangle (15.5,2.5);
    \node[align=left,anchor=north west] at (7, 2.5){\textbf{\normalsize Step 3: Generating variations}};
    
    % step3 左框
    \draw(7.2,-0.3) rectangle (12,1.9);
    % \fill[light_blue](7.2,-0.3) rectangle (12,1.9);
    \node[align=left,anchor=north west] at (7.2, 1.9){\textbf{Variety 1:}};
    \node[align=left,anchor=west] at (7.2, 1.2){$I_1\colon$dz\textipa{\super{w}}};
    \draw [orange_yellow, line width=0.04cm]  (7.3, 1) rectangle (8.35,1.4);
    
    % arrow p_dia
    \draw [darker_blue, line width=1pt, double distance=5pt, arrows = {-Latex[length=2mm, scale width =2.5]}] (8.4,1.2) -- (9.2,1.2);
    \node[align=left,anchor=west] at (8.35,1.2){$p_{\text{dia}}$};
    
    % vertical arrow noise
    \draw [black, line width=2.5pt, arrows = {-Latex[length=1mm, scale width=2]}] (8.7,0.7) -- (8.7,1.1);
    \node[align=left,anchor=west] at (8.35,1.2){$p_{\text{dia}}$};
     % 椭圆 + noise
    \draw (8.7,0.5) ellipse (0.37 and 0.2);
    % \fill[dark_blue] (8.7,0.5) ellipse (0.37 and 0.2);
    \node[align=center,anchor=west] at (8.25,0.5){noise};
    
    \node[align=left,anchor=west] at (9.15, 1.2){dz};
    
    % flower parenthsis
    \draw [decorate, line width=0.04cm,
    decoration = {calligraphic brace, 
        raise=-2pt, % 离path的距离
        aspect=0.71, % 控制尖端的位置
        amplitude=3pt % 尖端的凸起高度
    }] (9.7,0) -- (9.7, 1.7);
    
    \node[align=left,anchor=north west] at (9.7,1.9){$C_{1_1}$};
    \node[align=left,anchor=north west] at (9.7,1.5){$C_{1_2}$};
    \node[align=left,anchor=north west] at (9.7,1.1){$C_{1_3}$};
    \node[align=left,anchor=north west] at (9.7,0.7){$\cdots$};
    \node[align=left,anchor=north west] at (9.7,0.4){$C_{1_{n_1}}$};

    % arrow p_dia for C_{1_1}
    \draw [darker_blue, line width=1pt, double distance=5pt, arrows = {-Latex[length=2mm, scale width =2.5]}] (10.5, 1.65) -- (11.4, 1.65);
    \node[align=left,anchor=west] at (10.45, 1.65){$p_{\text{char}}$};
    
    % p_char
    \draw [darker_blue, line width=1pt, double distance=5pt, arrows = {-Latex[length=2mm, scale width =2.5]}] (10.5,0.1) -- (11.4,0.1);
    \node[align=left,anchor=west] at (10.45,0.1){$p_{\text{char}}$};
    
    % % vertical arrow noise
    \draw [black, line width=2.5pt, arrows = {-Latex[length=1mm, scale width=2]}] (11,1.1) -- (11,1.5);
    \draw [black, line width=2.5pt, arrows = {-Latex[length=1mm, scale width=2]}] (11,0.6) -- (11,0.2);
     % 椭圆 + noise
    \draw (11,0.85) ellipse (0.37 and 0.2);
    % \fill[dark_blue] (11,0.85) ellipse (0.37 and 0.2);
    \node[align=center,anchor=west] at (10.55,0.85){noise};

    \node[align=left,anchor=north west] at (11.4,1.9){ts};
    \node[align=left,anchor=north west] at (11.4,1.5){dz};
    \node[align=left,anchor=north west] at (11.4,1.1){dz};
    \node[align=left,anchor=north west] at (11.4, 0.7){$\cdots$};
    \node[align=left,anchor=north west] at (11.4, 0.4){dz\textipa{\super{j}}};

    \node[align=left,anchor=north west] at (11.9, 1.1){$\cdots$};
    
    % step3 右框
    \draw(12.5,-0.3) rectangle (15.3,1.9);
    % \fill[light_blue](12.5,-0.3) rectangle (15.3,1.9);
    \node[align=left,anchor=north west] at (12.5, 1.9){\textbf{Variety 20:}};
    \node[align=left,anchor=west] at (12.5, 1.2){$I_1\colon$dz\textipa{\super{w}}};

    % arrow p_char
    \draw [darker_blue, line width=1pt, double distance=5pt, arrows = {-Latex[length=2mm, scale width =2.5]}] (14.4,1.2) -- (15.3,1.2);
    \node[align=left,anchor=west] at (14.35,1.2){$p_{\text{char}}$};

    % arrow p_dia
    \draw [darker_blue, line width=1pt, double distance=5pt, arrows = {-Latex[length=2mm, scale width =2.5]}] (13.65,1.2) -- (14.45,1.2);
    \node[align=left,anchor=west] at (13.6,1.2){$p_{\text{dia}}$};
    
    \draw [orange_yellow, line width=0.04cm]  (12.6, 1) rectangle (13.65,1.4);

    % vertical arrow noise
    \draw [black, line width=2.5pt, arrows = {-Latex[length=1mm, scale width=2]}] (14.05,0.7) -- (14.05,1.1);
    
     % 椭圆 + noise
    \draw (14.05,0.5) ellipse (0.37 and 0.2);
    % \fill[dark_blue] (14.05,0.5) ellipse (0.37 and 0.2);
    \node[align=center,anchor=west] at (13.6,0.5){noise};
    
    % vertical arrow noise
    \draw [black, line width=2.5pt, arrows = {-Latex[length=1mm, scale width=2]}] (14.9,0.7) -- (14.9,1.1);
    \draw (14.9,0.5) ellipse (0.37 and 0.2);
    % \fill[dark_blue] (14.9,0.5) ellipse (0.37 and 0.2);
    \node[align=center,anchor=west] at (14.45,0.5){noise};

    \node[align=left,anchor=north west] at (12.5, 0.4){$I_2\colon$k\textipa{\super{h}} $\quad \cdots $};
    
    \end{tikzpicture}
    \caption{Overview of synthesis data.} \label{fig:synthesis-flow}

\end{figure*}
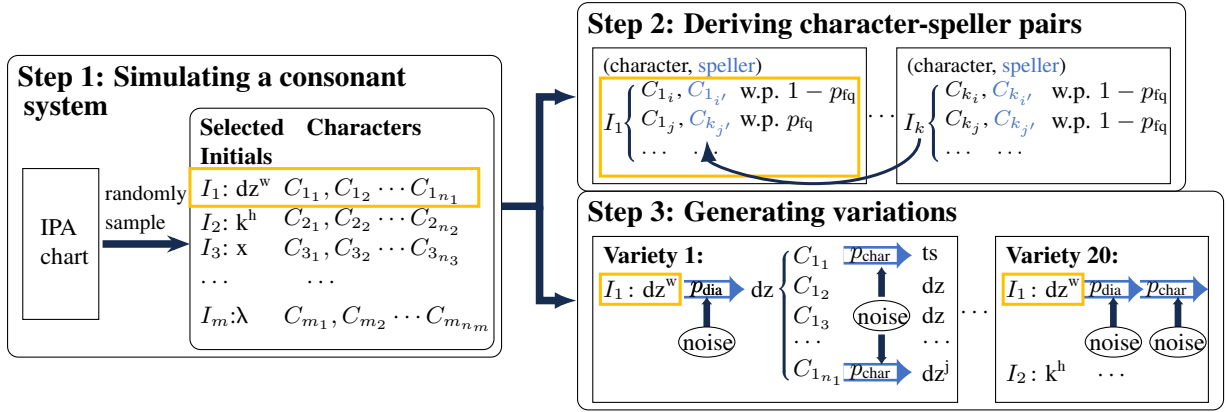

% \begin{figure*}
%     \centering
% % example
% {\scalefont{1}
% \def \globalscale {1}
% \begin{tikzpicture}[y=0.13cm, x=0.13cm,  
%     yscale=\globalscale,xscale=\globalscale, 
%     every node/.append style={scale=\globalscale}, 
%     inner sep=0pt, outer sep=0pt]

%     \node[shift={(5.4299, -19.7115)},anchor=south west] (text1) at (0.0, 39.1583){\textbf{Step 1: Simulating a consonant system}};
    
%     \end{scope}
%     \end{tikzpicture}
%     }
%     \caption{Caption}
% \end{figure*}
\begin{table*}[thbp]
    \centering
    % Table generated by Excel2LaTeX from sheet 'authentic2'
    \scalebox{0.9}{
    \begin{tabular}{rrrrl}
    \toprule
    \multicolumn{1}{l}{\textbf{Example}} & \multicolumn{1}{l}{\textbf{I-feature $\tau(j)$}} & \multicolumn{1}{l}{\textbf{D-feature $j$}} & \multicolumn{1}{l}{\textbf{Valid Combinations}} & \textbf{Shortest Distance} \\
    \midrule
    \multicolumn{1}{l}{(1.048, 0.905)} & \multicolumn{1}{l}{sonority} & \multicolumn{1}{l}{delayed release} & \multicolumn{1}{l}{\textbf{(1, 1)}, (1, -1), (2/3/4/5, 0), (0, 0)} & 0.143 \\
    % \midrule
    \multicolumn{1}{l}{(0.946, -0.919)} & \multicolumn{1}{l}{labial} & \multicolumn{1}{l}{labiodental} & \multicolumn{1}{l}{(1, 1), \textbf{(1, -1)}, (-1, 0), (0, 0)} & 0.135 \\
    % \midrule
    \multicolumn{1}{l}{(0.499, 0.988)} & \multicolumn{1}{l}{coronal} & \multicolumn{1}{l}{anterior} & \multicolumn{1}{l}{\textbf{(1, 1)}, (1, -1), (-1, 0), (0, 0)} & 0.503 \\
    \multicolumn{1}{l}{(0.499, 0.499)} & \multicolumn{1}{l}{coronal} & \multicolumn{1}{l}{distributed} & \multicolumn{1}{l}{(1, 1), (1, -1), (-1, 0), \textbf{(0, 0)}} & 0.998 \\
    % \midrule
    \multicolumn{1}{l}{(0.992, 1.952)} & \multicolumn{1}{l}{dorsal} & \multicolumn{1}{l}{high} & \multicolumn{1}{l}{(-1, 0), (\textbf{1}, 1/\textbf{2}/3), (0, 0)} & 0.056 \\
    \multicolumn{1}{l}{(0.992, 2.889)} & \multicolumn{1}{l}{dorsal} & \multicolumn{1}{l}{front} & \multicolumn{1}{l}{(-1, 0), (\textbf{1}, 1/2/\textbf{3}), (0, 0)} & 0.119 \\
    \midrule
          &       &       & \textbf{Total Distance:} & \textbf{1.954} \\
    \bottomrule
    \end{tabular}%
    }
\caption{Demonstration of how to calculate the `total distance' from a reconstructed vector. The `Example' column contains all the ($\tau(j), j$) value pairs in a reconstructed vector. 
For each pair, we highlight the shortest $L_1$ distance between it and all valid combinations with bold font. 
`Total Distance' is the sum of all shortest distances over $j$.}
\label{tab:self-sound}
\end{table*}

\section{Validation Experiments on Synthetic Data}
\label{sec:synthesis}

%we hope to establish the reliability of our approach through three distinct aspects: synthetic data, held-out data, and comparison to manual categorical reconstruction. 
%However, it is important to note that our evaluation methods are still in need of further refinement and exploration.
Since language reconstruction lacks a definitive ground truth, it is challenging to discuss the `correctness' of any reconstruction result.
We validate our method on a wide range of synthetic datasets, which hopefully mirror diachronic phonetic change. 
Starting from a predefined consonant system, we create varieties of it by introducing systematic change and random noise. 
We then extract character--speller pairs to mimic the \fq~information.
In order to evaluate the effectiveness in reconstructing the predefined consonant system, we apply our model to the varieties as well as character--speller data.
The idea of experimentation with synthetic data has been utilized to simulate lexical semantic change (\citealp{rosenfeld-erk-2018-deep,shoemark-etal-2019-room}).

\subsection{Generating Synthetic Data} \label{sec:generate-synthesis}
Data synthesis is demonstrated in Figure \ref{fig:synthesis-flow}.
It consists of three steps, as explained as follows.

\paragraph{Step 1: Selecting/simulating a consonant system}
To generate the old stage initial system, we randomly sample an initial set $S_I=\{I_1, I_2, \dots, I_m\}$ from an IPA chart of consonants
\footnote{The IPA chart is based on Hayes's feature spreadsheet (\url{https://brucehayes.org/120a/index.htm\#features}). 
Diacritics [\textipa{\super h} \textipa{\super w} \textipa{\super j}] are additionally considered.
}
, namely $S_{\text{IPA}}$, with $m\in[35,40]$. For each initial $I_i (1 \leq i \leq m)$, we generate a set of characters $S_{C_i}=\{C_{i_1}, C_{i_2}, \dots, C_{i_{n_i}}\}$, with the number of characters $n_i$ falling within $[20,80]$. 

In addition to purely artificial consonant system, we also utilize 
%consonant systems in natural phonology (English, Germany, Mandarin) as well as 
modern English, German, Mandarin and reconstructed Latin\footnote{
See \url{https://en.wikipedia.org/wiki/English_phonology}, 
\url{https://en.wikipedia.org/wiki/Standard_German_phonology},
\url{https://en.wikipedia.org/wiki/Standard_Chinese_phonology}, and \url{https://en.wikipedia.org/wiki/Latin_phonology_and_orthography} respectively.}.

\paragraph{Step 2: Deriving character--speller pairs} 
Following the model of rhyme dictionaries, we assign an artificial \fq~ spelling to each character. 
Given that not all characters and their spellers share the same initial (\S\ref{sec:obj}), we introduce variability by randomly assigning a portion $p_{\text{fq}}$ of characters to have their upper spellers randomly selected from $S_I$.

% For a character $C_{ij}$ with initial $i$, its speller can be $C_{ij^{\prime}} (j^{\prime}\neq j)$ with probability $1-p_{\text{fq}}$, and be $C_{i^{\prime}k} (i^{\prime}\neq i)$ with probability $p_{\text{fq}}$.

\paragraph{Step 3: Generating variations} 
We generate 20 varieties based on $S_I$ and $S_{C_i}$s to simulate sound change, denoted as $S_I^v=\{I_1^v, I_2^v, \dots, I_m^v\}$ and $S_{C_i}^v=\{C_{i_1}^v, C_{i_2}^v, \dots, C_{i_{n_i}}^v\} (1\leq i \leq m, 1\leq v \leq 20)$.
We assume that most sound changes are regular, where the phonetic value of an initial influences all characters with that initial in a given variety. To simulate regular sound change, initial $I_i$ can change to any $I_i^v \in S_\text{IPA}$ in variety $v$ with probability $p_\text{dia}$.\\
Exceptions to regular change can occur due to various causes, e.g. lexical borrowing and grammatical analogy, and we model such irregular change by allowing the initial of character $C_{i_j}^v$ to change from $I_i^v$ to any consonant in $S_\text{IPA}$ with probability $p_\text{char}$.\\
Denote the $L_1$ distance between $I_a$ and $I_b$ (both in $S_\text{IPA}$) as $d_{I_a, I_b}$.

\begin{figure*}[t!]
  \centering
    \subfigure[$p_{\text{dia}}=0.3$]{
    \label{Fig:baseline-Latin-1}
    \includegraphics[width=0.49\linewidth]{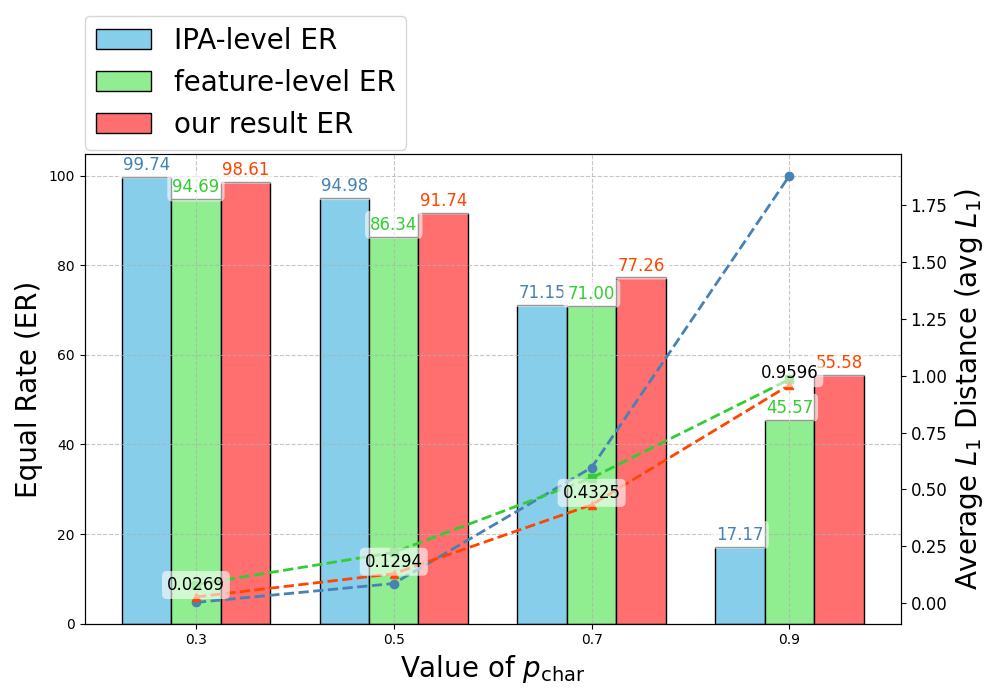}}
    \subfigure[$p_{\text{dia}}=0.5$]{
    \label{Fig:baseline-Latin-2}
    \includegraphics[width=0.49\linewidth]{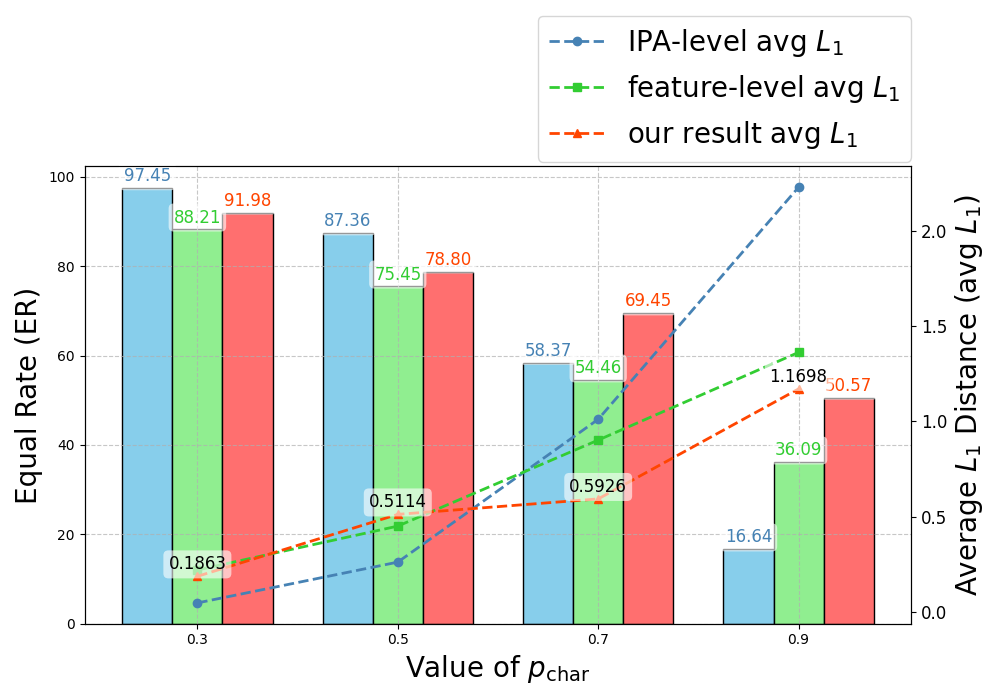}}
    \subfigure[$p_{\text{dia}}=0.7$]{
    \label{Fig:baseline-Latin-3}
    \includegraphics[width=0.49\linewidth]{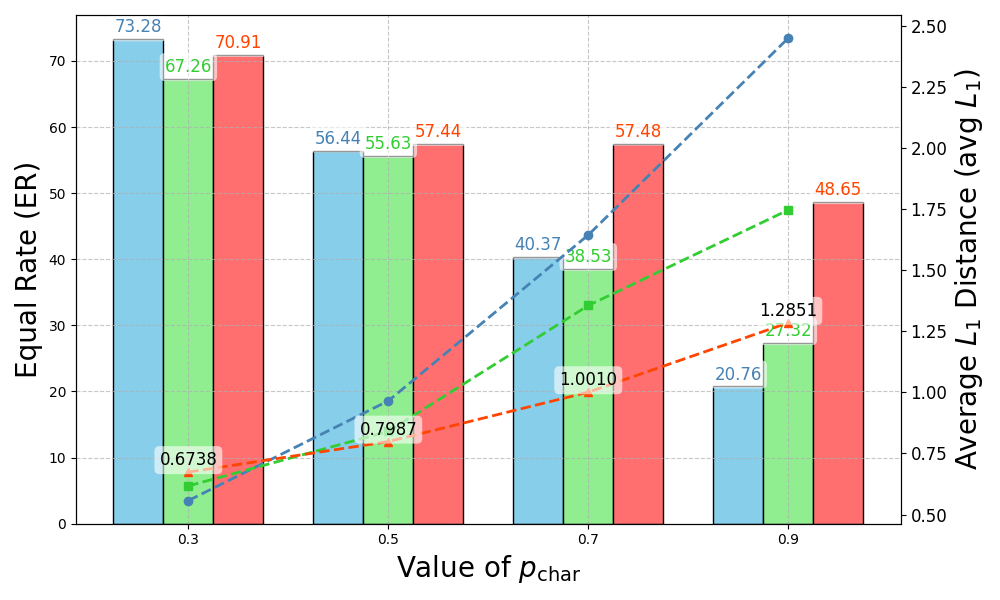}}
    \subfigure[$p_{\text{dia}}=0.9$]{
    \label{Fig:Fig:baseline-Latin-4}
    \includegraphics[width=0.49\linewidth]{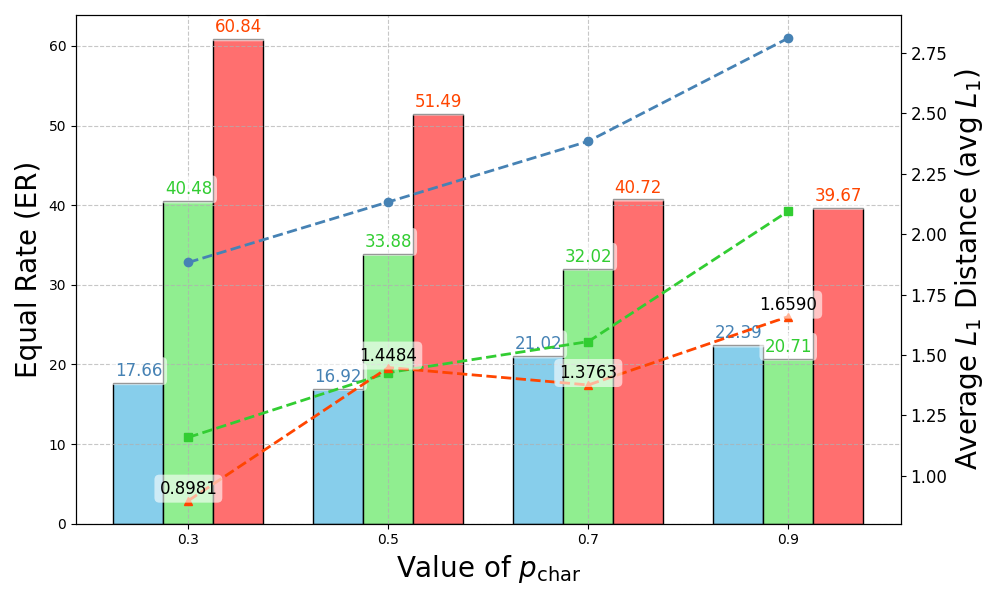}}
    \caption{Comparison between our results and baselines with synthetic data that starts from Latin consonant system, with respect to equal rate (ER) and average $L_1$ distance (avg $L_1$).}
  \label{Fig:baseline-Latin}
\end{figure*}

\subsection{Experimental Setup} \label{sec:synthesis-setup}
We use the Gurobi\footnote{\url{https://www.gurobi.com}} MIP solver for our empirical investigation.
We set \texttt{MIPGap} to 1e-4, and the \texttt{TimeLimit} to 8 hours as the maximal time for calculation. %\footnote{See \url{https://www.gurobi.com/documentation/current/refman/mipgap2.html} and \url{https://www.gurobi.com/documentation/current/refman/timelimit.html}}
Notably, the obtained solutions usually are not optimal. 
Nevertheless, they are of relatively good quality to verify the reliability of our model.

% There are 5 hyperparameters: $p_{\text{fq}}$, $p_{\text{dia}}$, $p_{\text{char}}$, $\mu$, $\sigma$.

We consider the following three metrics to evaluate the goodness of a reconstruction result, when ground-truth is available.
The ground-truth consonant system of the experiments in this section is either stochastically sampled from IPA, the reconstructed Latin, or modern English, German and Mandarin.

\paragraph{Average $L_1$} Since we represent phonemes by vectors, a straightforward way to evaluate the goodness of reconstruction is to calculate the overall distance between reconstructed vectors and their corresponding ground-truth vectors. 
To this end, we report the average $L_1$ distance. % between reconstructed initials and their predefined values.

\paragraph{Equal rate} A more strict evaluation metric is to reward only when the reconstructed vector is extremely close to its ground-truth. 
Here, we consider a phoneme as successfully reconstructed only when the $L_1$ distance between the reconstructed vector and its predefined value is smaller than $10^{-4}$,  
Accordingly, we report  the proportion of successfully constructed initials as \textbf{equal rate}.

\paragraph{Soundness of phonetic feature vector} 
The reconstructed results should be valid phonemes that satisfy the constraints on D-features listed in \S\ref{sec:restriction}. 
Our features are continuous, and we thus propose to measure their deviation from the constraints rather than classifying them as strictly `valid' or `invalid'.
For each D-feature $j$ and its corresponding I-feature $\tau(j)$, we consider the shortest $L_1$ distance between our result and all the valid values of $(j, \tau(j))$.
Table \ref{tab:self-sound} serves as an example, listing all $(j, \tau(j))$ pairs and their valid combinations. 
% Each pair corresponds to a shortest distance, and the total distance can measure the 
We report \textbf{sound rate}---the proportion of characters with a total distance less than $10^{-4}$.

% \paragraph{Violation rate of \fq} To assess the extent to which \fq~information is utilized in the reconstructed vector, we consider a character-speller pair to violate \fq~information if the $L_1$ distance between them is larger than $10^{-4}$. The proportion of such violated pairs among all character-speller pairs is reported as the \textbf{Violation Rate of \fq}.

\subsection{Results and Analysis} \label{sec:synthesis-result-analysis}
%We report the experimental results on the synthesised character--speller pairs and varieties. 
Our main results are shown in Figure \ref{Fig:baseline-Latin}, where we compare our results with two baselines. Since the phenomenon of characters having different initials from their upper spellers is not common in real data, we set $p_{\text{fq}}=0.1$.

We report two versions of majority vote results as baseline: IPA-level and feature-level. Considering the randomness in generating consonant systems and their variations, we conduct the experiment three times and report the average for each setting. In the IPA-level majority vote, for each character, we select the most frequent IPA phoneme from all 20 dialects to reconstruct its initial. For feature-level voting, we choose the most frequent value for each feature of each character. 

The IPA-level majority vote achieves the highest equal rate when $p_{\text{dia}}$ and $p_{\text{char}}$ are small, but its equal rate declines rapidly as randomness increases. In contrast, the feature-level majority vote performs better under high $p_{\text{dia}}$ and $p_{\text{char}}$ settings. 
Compared with the baselines, our model significantly outperforms both in terms of equal rate and average $L_1$ distance across most settings, particularly when $p_{\text{dia}}$ and $p_{\text{char}}$ are large, which highlights the robustness of our model.

%Notably, the ratios of sound change we set are not unreasonably high. 
To estimate a proper value of the change rate $p_{\text{dia}}$ is challenging.
We use the following geometric method to estimate a lower bound of $p_{\text{dia}}$ from the ancestral form (MC) to modern dialects based on the differences among all modern dialects. 
First, we measure how different any two dialects are by calculating the percentage of characters with different initial pronunciation. For example, this proportion between Beijing and Guangzhou is 65.70\%. 
Intuitively, halving such a difference degree gives a lower bound though the estimation based on any single pair of dialects should be far from being tight. 
We then leverage the concept of high-dimensional sphere to integrate all possible pairs of dialects.
The key idea is as follows. 
Each dialect is viewed as a point in a high-dimensional space and the percentage of characters with different pronunciation of initials is viewed as the distance between the corresponding pair of dialects. 
It is easy to see that such a distance measurement satisfies the triangle inequality. 
%Although the `distance' defined here lacks mathematical foundation, we verified the triangle inequality for every triple of dialects and found that it holds for each case.
The radius of the minimal high-dimensional sphere that covers all dialects serves as a (loose) lower bound. 
Based on the data from \citeauthor{zihui}, 
we first convert the distance matrix into coordinates using the algorithm proposed by \citet{calculation-of-coordinates}, 
% yielding a 20-dimensional coordinate representation for each dialect. 
then apply the algorithm proposed by \citet{Smallest-Enclosing-Ball} to determine the radius of the minimal sphere, obtaining an empirical value of 0.4180. 
The maximum of such lower bound is 0.8844, when any two dialects have totally different pronunciation (in other words, the distance is 1).
The empirical result suggests to utilise a high ratio of sound change.

% We have calculated the proportion of characters with different initials across all pairs of the 20 dialects. For example, the proportion of characters with different initials between Beijing and Jinan is 17.89\%, between Beijing and Yangzhou is 36.02\%, and between Beijing and Guangzhou is 65.70\%. We interpret this proportion as a `distance' between dialects, and our goal is to find the smallest radius of a sphere that covers all points. It is important to note that the `distance' lacks mathematical foundation, but we verified the triangle inequality for every triple of dialects and found that it holds for each case.

% Our approach consists of two steps. First, we convert the distance matrix into coordinates. We apply the algorithm proposed by \citet{calculation-of-coordinates} on the distance matrix, and obtain a 20-dimensional coordinate representation for each dialect. 
% Then, We utilize the algorithm proposed by \citet{Smallest-Enclosing-Ball} to find the smallest enclosing sphere of the 20 dialects, yielding a radius of 0.4180 in this scenario. 
% For comparison, we also compute the smallest radius inthe most extreme case, where the distance between any two dialects is set to 1, and the result is 0.8844.

Since the vectors derived from feature-level voting do not necessarily correspond to valid phonemes, we compare its sound rate with that of our model. Although our model maintains a high sound rate across all settings, we perform a two-proportion z-test to determine whether the difference between our model and the feature-level majority vote in terms of SR is statistically significant. The null hypothesis is that the SR of our method is equal to that of the baseline. The results are reported in Table \ref{tab:hypo-test}. If we test this hypothesis at a significance level of 95\%, for all settings except (0.5, 0.9), the statistic $z$
% $$z=\frac{|p_1-p_2|}{\sqrt{\hat{p}(1-\hat{p})(1/n_1+1/n_2)}}, \ \hat{p}=\frac{n_1 p_1+n_2 p_2}{n_1+n_2}$$
exceeds $z_{0.975}=1.96$, and we can reject the null hypothesis. For the (0.5, 0.9) setting, $z=1.7152$ is still larger than $z_{0.95}=1.65$. 
Therefore, we conclude that our model performs better than the baselines in terms of sound rate.

\begin{table}[htbp]
  \centering
  \scalebox{0.9}{
    \begin{tabular}{cccccc}
    \toprule
    \textbf{Setting} & \textbf{SR$_1$} & \textbf{$n_1$} & \textbf{SR$_2$} & \textbf{$n_2$} & \textbf{stat $z$} \\
    \midrule
    (.3, .3) & 1.0000  & 1078  & 0.9733  & 3138  & 5.4191  \\
    (.3, .5) & 0.9847  & 1110  & 0.9515  & 3290  & 4.8737  \\
    (.3, .7) & 0.9991  & 1107  & 0.9338  & 3436  & 8.6460  \\
    (.3, .9) & 0.9950  & 1001  & 0.9586  & 3160  & 5.6477  \\
    (.5, .3) & 0.9872  & 1173  & 0.9579  & 3228  & 4.7228  \\
    (.5, .5) & 0.9943  & 1047  & 0.9558  & 3109  & 5.9036  \\
    (.5, .7) & 0.9895  & 954   & 0.9422  & 3216  & 6.0635  \\
    (.5, .9) & 0.9826  & 1037  & 0.9731  & 3158  & 1.7152  \\
    (.7, .3) & 0.9829  & 994   & 0.9671  & 3108  & 2.5809  \\
    (.7, .5) & 0.9804  & 1172  & 0.9245  & 3174  & 6.8637  \\
    (.7, .7) & 0.9942  & 1030  & 0.9389  & 3165  & 7.2458  \\
    (.7, .9) & 0.9889  & 1169  & 0.9247  & 2985  & 8.0104  \\
    (.9, .3) & 0.9865  & 1040  & 0.9678  & 3190  & 3.1966  \\
    (.9, .5) & 0.9952  & 1048  & 0.9424  & 3179  & 7.1880  \\
    (.9, .7) & 0.9904  & 1038  & 0.9315  & 3272  & 7.2954  \\
    (.9, .9) & 0.9873  & 1004  & 0.9141  & 3339  & 8.0252  \\
    \bottomrule
    \end{tabular}%
    }
  \caption{Two-proportion z-test between our result and feature-level majority vote with respect to sound rate (SR). SR$_1$ and SR$_2$ represent the sound rates of our model and the baseline, respectively, while $n_1$ and $n_2$ indicate the sample sizes (number of characters) of our model and the baseline.}
  \label{tab:hypo-test}%
\end{table}%

% Moreover, in most settings, our result has a lower Violation rate of \fq~than baseline, demonstrating that \fq~information is better utilized by our model.
% We can find that our model is robust to different values of hyperparameters, which represents varying degrees of sound change. 
% Even when a large portion of characters' pronunciations are changed, our model can still successfully reconstruct most characters. 
% High equal rate greatly indicates the effectiveness of our model.

Though remarkable, we should exercise caution when interpreting the results. 
Naturally occuring sound changes display greater regularity in some cases but are much less regular in others.
% Natural sound change displays more patternisation in certain aspects while being more chaotic in others.

% \begin{table}[thbp] 
%     \centering
%     \scalebox{0.92}{
%     \begin{tabular}{ccccc}
%     \toprule
%     &\multicolumn{2}{c}{\bf Random system}&\multicolumn{2}{c}{\bf Latin}\\
%     \cmidrule(lr){2-3}\cmidrule(lr){4-5}
%     Settings & {ER(\%)} & {Avg. $L_1$} & {ER(\%)} & {Avg. $L_1$} \\
%     \midrule
%    (.1, .3, .2)   & 73.75 & 0.6953 & 86.83& 0.2763\\
%       (.1, .3, .3)   & 75.08 & 0.5722  &88.68  & 0.2253\\
%        (.1, .4, .2)   & 62.38 & 0.7518  & 86.79 & 0.2069\\
%         (.1, .4, .3)   & 59.37 & 0.8912  & 74.50 & 0.4456\\
%     \midrule
%     (.2, .3, .2)   & 77.02 & 0.4505  &86.47  & 0.1993\\
%     (.2, .3, .3)  & 70.15 & 0.5849  & 87.89 & 0.1995\\
%       (.2, .4, .2)   & 61.81 & 0.8298 & 87.09 & 0.2303\\
%         (.2, .4, .3)   & 77.93 & 0.4411  & 83.43 & 0.2830\\
%     \bottomrule
%     \end{tabular}
%     }
%     \caption{\label{tab:synthesis-basic-result}Results with synthetic data that starts from randomly sampled consonants and Latin respectively. The numbers in the `Settings' column correspond to ($p_{\text{fq}}, p_{\text{dia}}, p_{\text{char}}$) respectively. `ER' means equal rate.} 
% \end{table} 

%     &\multicolumn{2}{c}{\bf Random system}&\multicolumn{2}{c}{\bf Latin}\\
%     \cmidrule(lr){2-3}\cmidrule(lr){4-5}
%     Settings & {ER(\%)} & {Avg. $L_1$} & {ER(\%)} & {Avg. $L_1$} \\

\begin{table*}[thbp]
  \centering
    \begin{tabular}{cccccc}
    \toprule
     & \multicolumn{4}{c}{$k=1$}&\multicolumn{1}{c}{$k=3$}\\
    \cmidrule(lr){2-5} \cmidrule(lr){6-6} 
    \multicolumn{1}{c}{\textbf{Setting}} & $\lambda_{\text{fq}}=0$ & $\lambda_{\text{fq}}=0.5$ & $\lambda_{\text{fq}}=0.75$ & $\lambda_{\text{fq}}=0.95$ & \multicolumn{1}{l}{$\lambda_{\text{fq}}=0.5$} \\
    \midrule
    (0.1, 0.3, 0.3) &  96.85\%&  \textbf{98.61}\%& 98.89\% & 90.35\% & 98.70\% \\
    (0.1, 0.5, 0.5) & 75.45\% & \textbf{78.80}\% & 79.94\% & 68.77\% & 79.27\% \\
    (0.1, 0.7, 0.7) & 54.37\% & \textbf{57.48}\%& 59.13\% & 53.69\% & 57.67\% \\
    \bottomrule
    \end{tabular}%
    \caption{\label{tab:synthesis-adjust-parameters}Results on Latin consonant system with parameters adjusted. 
    The numbers in the `Settings' column correspond to ($p_{\text{fq}}, p_{\text{dia}}, p_{\text{char}}$) respectively. 
    % `Basic' represents $\lambda_{\text{fq}}=0.5$ and $k=1$. 
    $k$ represents the weight assigned to character and speller pairs with matching medials, as defined in \S\ref{sec:model-context}.} 
\end{table*}

\paragraph{Base Distance Function $f$} \label{form-of-function}
The general distance function $f$ (defined in \S\ref{sec:obj}) is also a hyperparameter. 
In Figure \ref{Fig:baseline-Latin}, it is set as $f(x_1,x_2)=|x_1-x_2|$. 
We did a number of auxiliary experiments with different $p$-norm distance functions. 
Even the quadratic function significantly increases the difficulty to the corresponding optimisation problems, without substantial improvement in performance. 
It seems that the only practical option for $f$ is $L_1$.
All following experiments are based on this choice.
%Here, We set $p$ percent of $f$ as $f_2$, while the rest remains $f_1$. The relationship between $p$ and performance is in Table \ref{table:efficiency-2661}\todo{wait for experiments}.
%
%We find that quadratic terms significantly increase the difficulty to the corresponding optimisation problems, without improvement in performance. 
%We assume that polynomial functions with a higher degree $d$ ($d >2$) as well as $d$-norm functions are even more computationally demanding. As a result, it seems that the only practical option for $f$ is $f(x_1,x_2)=|x_1-x_2|$.

\paragraph{Weight of \fq~($\lambda_{\text{fq}}$)} \label{sec:fq-weight}
We adjust the weight of terms related to \fq~in the objective function, i.e. $\lambda_{\text{fq}}$ in Eq. (\ref{eq:objective}). By default, $\lambda_{\text{fq}}$ is set to 0.5, and we explore its effect with respect to equal rate in Table \ref{tab:synthesis-adjust-parameters}. Setting $\lambda_{\text{fq}}$ to 0 (i.e., not using \fq~information) results in a decrease in the equal rate, while increasing it to 0.75 improves the equal rate. However, further increasing it to 0.95 leads to a decline. These experiments suggest that the choice of $\lambda_{\text{fq}}$ is empirical, and we will adjust it accordingly when working with real data (\S\ref{sec:eval-AMI}).

\paragraph{Results with English, German and Mandarin}
To evaluate the robustness, we experiment with synthetic data that starts from a consonant system in natural phonology. We choose modern standard English, German and Mandarin as representatives. 
We also present results from random consonant system for comparison.
The results are in Table \ref{tab:other-language}, under the setting of (0.1, 0.5, 0.3). The remarkable results reaffirm the reliability of our model.

It is worth noting that reconstructing natural consonant systems is much easier than artificial ones.

\begin{table}[htbp]
  \centering
\begin{tabular}{ccccc}
    \toprule
      & \textbf{Ger} & \textbf{Man} & \textbf{Eng} & \textbf{RND} \\
      \midrule
\textbf{ER}(\%) & 96.10 &  94.48   &  93.02  & 84.73 \\
\textbf{Avg. $L1$}\unboldmath{} & .1894 &  .0884    & .1519 & .2549 \\
    \bottomrule
\end{tabular}%
    \caption{\label{tab:other-language}Results with synthetic data that starts from German, Mandarin, English, and the random system.}
\end{table}%

\subsection{Modeling the Influence of Context} \label{sec:model-context}
The phonetic value of phonemes can be influenced by its context. 
For initials in Chinese syllables, the major influential context is the medial that follows. 
To integrate medial information in MC\footnote{The data is provided by Peking University.}
into our model, we adjust the weight of terms related to \fq~in the objective function. 
For each character and speller pair, i.e. $(X,X_u) \in S_{\text{fq}}$, we assign a weight of $k$ to $d(F_{\text{MC}}(X), F_{\text{MC}}(X_u))$ if they share the same medial, otherwise $1$. In the basic setting, $k=1$, and increasing it aims to improve the likelihood of pairs with matching medials sharing the same initial.

Setting $k$ to 3, we present representative results in Table \ref{tab:synthesis-adjust-parameters}. Changing $k$ from 1 to 3 has little influence on equal rate, indicating that medial information has already been well captured. 
Notably, our model is inherently conditional, 
as we always consider an initial in a particular character, where the medial, main vowel and coda are all fixed. 
%many initials only occur before medials that meet specific conditions, which likely explains the capture of medials.
When human scholars encounter overlapping heterogeneous information sources, decision-making becomes challenging, while our model provides a possible technique for such challenging issues.

The results demonstrate the flexibility of our model---heterogeneous information can be seamlessly integrated as constraints or terms in the objective function, and be integrated into our model conveniently. Furthermore, the flexibility to adjust weights of different terms allows fine-tuning according to specific requirements.

\begin{comment}
\begin{table}[th] 
  \centering
    \begin{tabular}{c|cccc}
    \hline
    \textbf{p} & 0 & 0.05 & 0.1   & 0.2  \\
    \hline
    \textbf{AMI} & 0.5917 & 0.5925 & 0.5919  & 0.5886\\
    \hline
    \textbf{time}(s) & 104.51 & 158.47  & 163.23  & 152.37 \\
    \hline
    \hline
    \textbf{p} & 0.3   & 0.4   & 0.5 & 0.6\\
    \hline
    \textbf{AMI} & 0.5870  &   0.5880  & 0.5888 & N/A \\
    \hline
    \textbf{time}(s)& 225.09  & 227.52  & 12268.53 & N/A \\
    \hline
    \end{tabular}%
    \caption{\label{table:efficiency-2661} Efficiency and accuracy wrt different $p$. 
    For each $p$, the experiment was ran for 3 times. The AMI of the three solutions have little difference, while the solving time may have a large variance due to the randomness in the solving algorithm. For example, when $p=0.5$, the solving times are 356.35s, 36005.28s, and 435.24s.}
\end{table}
\end{comment}
\section{Validation Experiments on Real Data} 
\label{sec:experiment}
%\todo{remove}With our model validated, we continue with experiments on real data. We validate our results on real data in two ways: held-out data and AMI.

\subsection{Collecting Real Data} \label{sec:dataset}

We examine our model with the spelling information in \qy~and the phonetic information of 20 dialects in \citet{zihui}.
Polyphonic characters (characters with multiple pronunciations) are common in both MC and dialects. 
We treat different pronunciations of the same character as different entries and correlate different vectors to them. %since they correspond to different meanings in most cases\footnote{Strictly speaking, they can be viewed as different words recorded with the same character.}.
The final dataset consists of 1960 different characters and 2661 entries in total\footnote{We will release the dataset for research.}.
%Below is the processing procedure to obtain this dataset.

%The Qi\=ey\`un dataset comprises 25333 entries, each containing phonological categories. The dialect dataset contains 2961 frequently used characters and the IPA transcriptions of their pronunciations in 20 dialects. Then we align the two datasets, associating each character with its phonological category in MC and its pronunciation across 20 modern dialects.\footnote{In essence, we hope the corresponding MC and dialect entries to represent the same `word' (with the same meaning). However, it is common that a character has multiple pronunciations in MC or a dialect. 
%In this case, the correspondence is not one-to-one any more. To deal with the problem, we train a Random Forest Classifier for each dialect to capture sound change patterns and identify the corresponding entry pairs.} We benchmark the task of Middle Chinese reconstruction, and enable data-driven research. 
\paragraph{The \qy~information}
Only fragments of the original \qy~survived, and most commonly used documents are its revisions. 
The most accurate revision is \gy~廣韻. 
Though published in the Song Dynasty, \gy~is commonly believed to record the \qy~system and reflect the status of MC. 
\gy~was heavily used in traditional philological research, including \citet{gbh} and \citet{wangli-1957}.
We collect and integrate information from two electronic versions of \gy, separately provided by Peking University and  Beijing Normal University.
% TODO: we should add kaom back. \footnote{The version provided by Beijing Normal University is available at \url{www.kaom.net}.}. 
%phonological categories of characters from the version provided by the Center of Linguistic Study at Peking University, incorporate the meanings from the dataset offered by Beijing Normal University\footnote{Available on \url{www.kaom.net}}. 

\paragraph{The dialect information}
%Our phonetic reconstruction heavily relies on dialects of modern Chinese. Many features of MC vanished in Mandarin, but are still present in some dialects, such as voiced stops are well preserved in Wu.
%The dialect materials we used mainly come from \citet{zihui}. As discussed in \S \ref{intro-dialect}, there are seven major Chinese dialect groups: Mandarin, Wu, Min, Xiang, Gan, Hakka and Yue. 
\citet{zihui} is a workbook for fieldwork on Chinese dialects.
There is information for 20 modern Chinese dialects: Beijing, Jinan, Xi'an, Taiyuan, Wuhan, Chengdu, Hefei, Yangzhou (Mandarin), Suzhou, Wenzhou (Wu), Changsha, Shuangfeng (Xiang), Nanchang (Gan), Meixian (Hakka), Guangzhou, Yangjiang (Yue), Xiamen, Chaozhou, Fuzhou, and Jianou (Min). 
For each of these dialects, it documents both the phonological system and the phonetic values of representative characters.

\paragraph{Selecting representative characters} % \label{final-dataset}
The original \qy~dataset has 25333 entries, but a large proportion of them are rarely used. 
In contrast, \citet{zihui} contains less than 3000 frequently used characters.
We denote the characters included in \gy~and \citet{zihui} as $S_{\text{gy}}$ and $S_{\text{zh}}$, respectively, and denote all characters used as \fq~spellers of characters in \gy~as $S_{\text{fq}}$ ($|S_{\text{fq}}|=$1462).
Instead of using all available characters, we aim to select a set of representative characters that comprehensively reflect the entire phonological systems. Our selection process involves the following steps:
\begin{enumerate}
    \item[1.] Subtract a smaller set $S_{*}$ from $S_{\text{gy}}$ for subsequent selection. Since \fq~spellings connect different characters and encapsulate valuable relationships between them, they are essential for deriving phonological categories. Therefore, We define $S_{\cap}$ to include common characters as well as \fq~spellers. Specifically, $S_{\cap}=S_{\text{gy}} \cap (S_{\text{fq}} \cup S_{\text{zh}})$ with 3990 entries. 
    \item[2.] For each character in $S_{\cap}$, if both its upper and lower spellers are also in $S_{\cap}$, this character is considered of particular interest. This set is denoted as $S^{*}$, which contains 2461 entries.
    \item[3.] Among $S^{*}$, if several characters share the same \fq, indicating that they are homophones, we select the first character only, which is often the most frequent character. We denote this set as $S^{*}_{1}$.
    \item[4.] Finally, we include \fq~spellers themselves into the selected set to link different entries. Our final representative character set is $S^{*}_{1} \cup (S_{\text{fq}} \cap S_{\cap})$, with 2661 entries.
\end{enumerate}

%We want to decrease the size of the dataset while preserving characters that are interrelated through F\v anqi\=e, as this interrelation is crucial for utilizing categorical information. 

% We also report the AMI between the results derived by hierarchical clustering under different thresholds and the predefined categories in Figure \ref{Fig:AMI-cluster-num}.

% \subsection{Evaluation on held-out \fq~data}
\subsection{Results and Analysis} \label{sec:real-data-evaluate}
 
\paragraph{Matching to held-out \fq~data}
Ideally, each character should share the same initial with its \fq/\zhiyin~speller. 
We randomly take 70\% of \fq/\zhiyin~material for MC reconstruction, and use the remaining 30\% for evaluation. 
We consider a character--speller pair as having matching initials if the $L_2$ distance between a character's reconstructed initial vector and that of its upper speller's is smaller than $10^{-4}$. 
We report the average $L_2$ distance between the reconstructed initials in character--speller pairs and the rate of pairs with matching initials as the \textbf{matching rate}.

The results are shown in Table \ref{table:held-out}. 
A large portion of held-out character--speller pairs have matching reconstruction, affirming the self-consistency of our results. 

\begin{table}[htbp] 
  \centering
    \begin{tabular*}{\linewidth}{@{}cccc@{}}
        \toprule      {$\lambda_{\text{fq}}$} & 0.5 &0.75 &0.95 \\
        \midrule
        %\textbf{Total Num.} &  664   &  665  & 703 \\
        \textbf{Matching Rate} & 66.38\% &  65.33\%    & 67.96\% \\
        \textbf{Avg. $L_2$} &  1.1062  &  1.1046  & 1.2432 \\
        
        \bottomrule
        \end{tabular*}
    \caption{\label{table:held-out}Evaluation with held-out \fq. %`Total Num.' is the total number of \fq/\zhiyin~spellings for evaluation. `M. Rate' means `matching rate'.  
    } 
\end{table}%

\begin{comment}

\begin{figure*}[htbp]
\centering
\subfigure[FW-1]{
\label{Fig:AMI-cluster-num-1}
\includegraphics[width=0.49\linewidth]{figure/hier2_AMI_num_fq1_cl1.jpg}}
\subfigure[FW-3]{
\label{Fig:AMI-cluster-num-2}
\includegraphics[width=0.49\linewidth]{figure/hier2_AMI_num_fq3_cl1.jpg}}
\caption{The number of clusters and small clusters, and the AMI metric under different FW.}
\label{Fig:AMI-cluster-num}
\end{figure*}
    
\end{comment}
\section{Reconstruction Results and Discussion} 
\label{sec:main-result}
We obtain our final reconstruction result by applying the method to all available data introduced in \S\ref{sec:dataset}.
Our reconstruction is based on individual characters, while existing results are based on phonological categories. 
A straightforward way to obtain category-centric result is averaging phonetic feature vectors of all characters belonging to the same phonological category. 
The nearest IPA phonemes to the averaged vectors can be directly used for comparison to previous manual results by philologists.
\footnote{A comprehensive summary of the result can be found at \url{https://github.com/LuoXiaoxi-cxq/Reconstruction-of-Middle-Chinese-via-Mixed-Integer-Optimization}.}

\subsection{Numerical Evaluation} \label{sec:eval-AMI}
%Characters within the same category have the same initial in their reconstruction, but not necessarily in ours. 
%Nevertheless 
The phonetic vectors resulted from our model should form clusters that align with phonological categories in \gy~to some extent.
Based on this assumption, we develop a clustering based method to evaluate the overall quality of a reconstruction result. 
We cluster the phonetic vectors with KMeans with a predefined number of clusters equal to 37\footnote{There are 38 categories in the manual categorial reconstruction of \gy. 
Our dataset excludes characters with the \ch{俟} category.}.
We then report the \textit{adjusted mutual information}  \citep[AMI;][]{AMI-2010}, an information-theoretic measure, between the automatic clustering and predefined phonological categories. 
%AMI measures the similarity between two clusterings. 
Given two clusterings $U$ and $V$, 
$$\mbox{AMI}(U,V)=\frac{\mbox{MI}(U, V) - \mathbb{E}[\mbox{MI}(U, V)]}{\mbox{avg}(\mbox{H}(U), \mbox{H}(V)) - \mathbb{E}[\mbox{MI}(U, V)]}$$
where $\mbox{H}$ is the Shannon entropy, $\mbox{MI}$ is the mutual information, and $\mathbb{E}$ is expectation.
Perfectly matched clusters yield an AMI of 1, while random cluster assignment yields 0.
The numbers of samples and clusters are not necessarily the same.

\begin{figure}[!t]
    \centering
    \includegraphics[width=\linewidth]{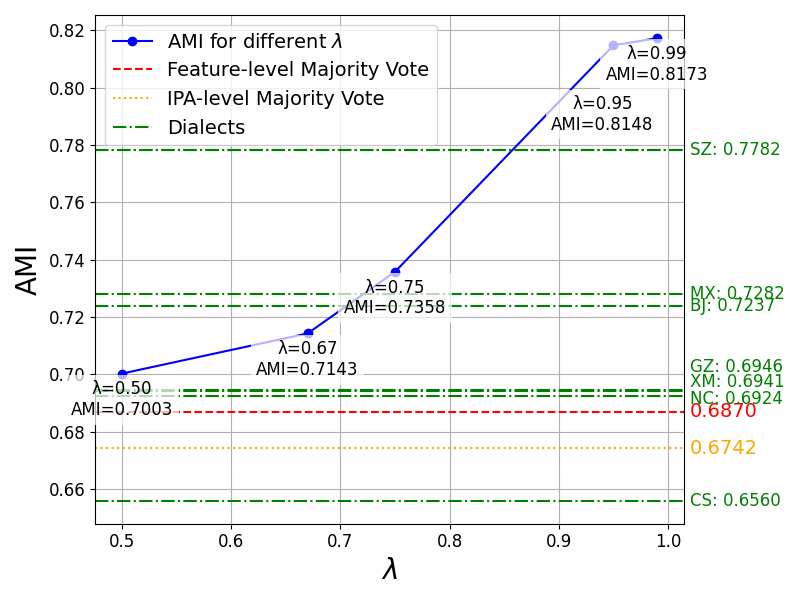}
    \caption{AMI values with different $\lambda$ compared to baseline results. 
    AMI values for seven individual dialects (BJ: Beijing, SZ: Suzhou, CS: Changsha, NC: Nanchang, MX: Meixian, GZ: Guangzhou, and XM: Xiamen) are presented, each representing one dialect group (see \S\ref{sec:dataset}).
    The single best dialect is Suzhou, with the highest AMI (0.7782). In feature-level majority vote, features are aggregated and voted individually. The KMeans algorithm is then applied to the voted feature vector, yielding an AMI of 0.6870.
    }
    \label{fig:AMI-our-baseline}
\end{figure}

Fig. \ref{fig:AMI-our-baseline} shows our results, along with two baselines (majority vote and single best dialect).
AMI is largely influenced by the value of $\lambda_{\text{fq}}$, indicating the effectiveness of \fq~in deriving categories.

Since we are dealing with 20 dialects but only a single set of \fq~spellings, setting $\lambda_{\text{fq}}$ to 0.95 is a natural choice. 
However, since the information obtained from the dialects and \fq~lacks a common scale, it is difficult to make direct comparisons.
As a result, we cannot theoretically determine the optimal weighting for each information source, and the choice of $\lambda$ is therefore largely empirical. 

When $\lambda_{\text{fq}}$ is set to 0.95, our model achieves an AMI of 0.8148 and outperforms the baselines, indicating a high degree of similarity between the phonetic reconstruction by us and the manual phonological reconstruction by philologists. 
When $\lambda_{\text{fq}}$ is set to 0.99, the AMI increases to 0.8173, although the change is minimal.

In contrast to Table \ref{tab:synthesis-adjust-parameters}, where setting $\lambda_{\text{fq}}$ to 0.95 results in a decrease in equal rate, increasing $\lambda_{\text{fq}}$ actually improves AMI when applied to real data. This phenomenon reflects the difference in synthetic and real data, emphasizing the importance of adjusting $\lambda_{\text{fq}}$ based on specific situations.

It is worth noting that when $\lambda_{\text{fq}}$ is set to 0 (i.e., without using \fq), AMI drops to 0.5892. This further reflects the critical role of \fq~information when dealing with real data.

In some auxiliary experiments that are not reported in this paper, we also used hierarchical clustering to further study the impact of the number of clusters. Results show that the difference in AMI between hierarchical clustering and KMeans is within 0.05, regardless of the specific setting.

\subsection{Comparison to Existing Results} \label{sec:comparison}
Our model successfully reconstructs most categories with consensus among philologists, such as b\=ang 幫 p\=ang 滂 m\textipa{\'\i}ng 明 du\=an 端 t\`ou 透 n\textipa{\'\i} 泥\footnote{All the Chinese characters used in \S\ref{sec:comparison} are categorical labels representing initial categories in traditional Chinese phonology. For example, characters f\=ang 方, f\v u 府,  b\'o 博, b\textipa{\v\i} 彼, and many other characters are assumed to have the same initial in MC, and philologists use b\=ang 幫 to represent their common initials.}.
% For further details, see \citet{shen_2020} \S 1.4.}.
Similar to the computational operationalisation of the historical comparative approach to Indo-European languages \citep{list-etal-2022-new}, our study confirms the usefulness of computation in linguistic inquiry.  
%showing that our model at least has the basic ability.
The differences between different reconstruction results may provide new evidence for philologists and linguists to consider and therefore are useful too.
Such differences are mainly attributed to two factors.

%\paragraph{Lack of proper weights}
Different dialects changed in different directions. 
For example, it is generally believed that Wu dialects retained all the voiced stops, while most other dialects became devoiced \citep[pp.224--225]{huang2014handbook}.\footnote{For example, the character t\'ong 同 is believed to had a voiced initial d\textipa{\`\i}ng 定 in MC. In Wu dialect, its initial is [d], while in most other dialects, it is [t\textipa{\super{h}}].}
In phonology, devoicing refers to a sound change where a voiced consonant becomes voiceless due to the influence of its phonological environment. This process is common across many languages and is a part of Grimm's law. 
%In practical reconstruction, certain phenomena that only exists in one or two dialect groups can be more meaningful. 
Our current model, however, cannot differentiate in what aspects a dialect changed most and in what aspects it stayed constantly.
It treats different dialects with equal weight on different phenomena.
Consequently, a character's reconstructed initial tends to be closer with the phonetic value that is more commonly observed across various dialect pronunciations. 
For example, our model fails to reconstruct the `voiced' feature for categories that are assumed to be voiced by philologists, e.g. b\textipa{\`\i}ng 並 d\textipa{\`\i}ng 定 c\'ong 從 xi\'e 邪. 
%instead reconstructing them as voiceless, due to the loss of voicing as a distinctive feature in most dialects. 
Our model also has difficulty distinguishing the zh\textipa{\=\i} 知 zhu\=ang 莊 zh\=ang 章 groups, which have similar pronunciations in most modern Chinese dialects.

%\paragraph{Lack of information sources}
Our current model only contains the most basic information---\fq~spellings and modern Chinese varieties. 
Other types of information, including rhyme tables, e.g. Y\`unj\`ing 韻鏡, and seno-xenic\footnote{Sino-Xenic vocabularies are large-scale and systematic borrowings of the Chinese lexicon into the Japanese, Korean and Vietnamese languages. See \url{https://en.wikipedia.org/wiki/Sino-Xenic_vocabularies} for details.} pronunciations are not integrated into our model at present. 
Rhyme tables provide additional information about the voiced/voiceless feature of initials, which is crucial for philologists' manual reconstruction. 
It is another reason why our model fails to reconstruct voiced initials.

Because of the limitation in information sources, our model cannot provide definitive answers to some debatable problems, such as 
    whether categories n\'i 泥 and n\'iang 孃 are the same initial. 
It is generally believed that there is no distinction between the two categories in most modern Chinese varieties (\citealp[p.228]{Tseng-ni-niang}, \citealp[p.126]{lr-1956}).
Though \citet[pp.125--126]{lr-1956} and \citet[pp.98--101]{shaorongfen} have opposite opinions about this problem, they both used Sanskrit-Chinese pronunciations as the main evidence. 
However, in our model, with materials restricted to dialects, the reconstruction of n\'i 泥 and n\'iang 孃 appears similar.

\subsection{Extension}
In principle, our method can be generalized to other languages. However, in practice, our model requires phoneme-level alignment between each protoform's reflexes. 
For Chinese, this alignment occurs naturally, as each Chinese character typically corresponds to a morpheme, and morphemes are largely represented by single syllables that follow specific patterns, as described in \S\ref{sec:syllable-structure}.

Sound change is a central focus in linguistic research, and our model can engage with it in two ways.
First, incorporating common patterns of sound change as constraints into our model is a possible future direction. 
Second, by analyzing $F_{l}(X)-F_{\text{MC}}(X)$ in Eq. \ref{eq:objective}, we may identify potential sound changes in terms of distinctive features, such as devoicing.

\section{Related Work} \label{related-work}
%Our work mainly extends two fields: computational protolanguage reconstruction and Middle Chinese phonology.

\subsection{Computational Reconstruction} \label{related-work-reconstruct}
% Protolanguage reconstruction relies on cognates, and the task cognates detection has been extensively studied in recent years (\citealp{tsvetkov-2015,ciobanu-dinu-2015-automatic}). However, few studies took a further step to automate the process of protolanguage reconstruction.

 \citeauthor{bouchard-2007a} (\citeyear{bouchard-2007a, bouchard-2007b,bouchard-2009,bouchard-2013}) proposed a series of influential work about unsupervised proto-word reconstruction, which requires an existing phylogenetic tree to infer the ancient word forms based on probability estimates for all the possible phoneme-level edits on each branch of the tree. The edit model parameters and unknown ancestral forms are jointly learned with an EM algorithm. 

Following this series of work, \citet{he-etal-2023-neural} also used Monte-Carlo EM algorithm but neural networks to parameterize the edit models, in order to express more complex phonological and the nonadjacent changes, achieving a notable reduction in edit distance from the target word forms. However, his highly parameterized edit models were designed for large cognate datasets with few languages, and may not be possible to train them on datasets with more languages but fewer datapoints per language. 

In supervised protolanguage reconstruction, the models are easier to evaluate. \citet{meloni-2021} trained a GRU-based encoder-decoder architecture on cognates from five Romance languages to predict their Latin ancestors, and achieved low error from the ground truth. \citet{kim-etal-2023-transformed} updated \citeauthor{meloni-2021}'s model with the Transformer and achieved better performance.

\citet{list-etal-2022-new} proposed a new framework for supervised reconstruction that combines automated sequence comparison with phonetic alignment analysis, which deals with the losing reflexes problem, and sound correspondence pattern detection, which models phonetic environments of sound change. 

\citet{lu-2024-improved-neural} proposed a multi-model reconstruction system that improves its reconstructions via predicting the reflexes given a protoform. Their system consists of a beam search-enabled sequence-to-sequence reconstruction model and a sequence-to-sequence reflex prediction model that serves as a reranker, surpassing state-of-the-art protoform reconstruction methods on three of four Chinese and Romance datasets.

\subsection{Middle Chinese Phonology} \label{related-work-MCP}
% The study of phonological categories has a long tradition in China. Chen Li firstly proposed a systematic method to obtain the categories of initials by analyzing the F\v anqi\=e spellings. His method has been revised and improved by a number of scholars, including e.g. \citet{lr-1956}.
% The method is still influential to the study of phonological reconstruction even today. 

Phonetic reconstruction of phonological categories was pioneered by \citet{gbh}. 
% Based on the categories from traditional methods, \citet{gbh} systematically compared modern dialect pronunciations of characters in order to associate the phonetic features with categories, and then assign reconstructed pronunciations to individual characters according to their categories. 
Following the methodology of \citet{gbh}, subsequent scholars, including \citet{lfk-1971}, \citet{wangli-1957}, \citet{pulleyblanks}, \citet{Baxter1992}, made modifications to the methodology and proposed their reconstructions of MC.

In recent decades, some scholars have questioned the assumptions, methodology and conclusions of Karlgren's approach.
A critical view is exemplified by \citet{norman-1995}. 
Norman advocated a data-centered approach to Chinese historical phonology, predicated on the collection, analysis, and comparison of spoken-language data. 
His controversial reconstruction of Proto-Min (\citealp{Norman-1973}, \citeyear{norman-1974}) is an example.
\section{Conclusion} \label{conclusion}

%In this paper, We formalize the task of phonetically reconstruct the consonant system of Middle Chinese into a MIQP problem. Results have shown that our innovate method is promising. 
We propose a novel, MIP-based method for phonetic reconstruction for Middle Chinese,
and validate its effectiveness on a wide range of synthesis and real data. 
Similar to the automation of the historical comparative approach to Indo-European languages, our study confirms the usefulness of computation in linguistic inquiry.  
The optimisation-based architecture is flexible---different information can be integrated as either an element in the objective function,  constraints, or both.
It is also applicable to the reconstruction problem of other languages.
We leave both for future work.

\section*{Acknowledgment}
We would like to express our sincere gratitude to the reviewers for their valuable comments, which greatly broadened our perspective and significantly improved the quality of our work. We would also like to thank Kechun Li for her suggestions.

\bibliographystyle{plainnat}
\bibliography{ref}

\section*{Appendix} \label{appendix}
Here, we prove that the distance function (\ref{distance-func}) defined in \S\ref{sec:dis-f} is mathematically sound.

It is easy to see:
\begin{enumerate}
  \item $g_{j,k}(F_1, F_2)\geqslant 0$. 
  \item $g_{j,k}(F_1, F_2) = g_{j,k}(F_2, F_1)$. 
\end{enumerate}

Now we consider the triangle inequality.
% Table generated by Excel2LaTeX from sheet 'example'
\begin{table*}[t!]
  \centering
    \begin{tabular}{rccll}
    \toprule
    \textbf{feature} & \textbf{[m]} & \textbf{[f]} & \textbf{dist.} & \multicolumn{1}{c}{\textbf{Note}} \\
    \midrule
    continuant & -1    & 1     & \textbf{2} & \\
    delayed release & 0     & 1     & \textcolor{blue}{\textit{2}}$^{\clubsuit}$ &  $^{\clubsuit}$ \small $ j=\text{delayed release}, \tau(j)=\text{sonority}$. $c=\min\{1, f(F_1^{\tau{(j)}}, F_2^{\tau{(j)}})\}=1, $ \\
    sonority & 2     & 1     & \textbf{1} & \small $s_j=2,\  g_{j,\tau{j}}(F_1, F_2)=c \cdot 2 + (1-c) f(F_1^j, F_2^j)=2$  \\
    voice & 1     & -1    & \textbf{2} &  \\
    spread glottis & -1    & -1    & \textbf{0} &  \\
    labial & 1     & 1     & \textbf{0} & $^{\diamondsuit}$\small $ j=\text{labiodental}, \tau(j)=\text{labial}$. $c=\min\{1, f(F_1^{\tau{(j)}}, F_2^{\tau{(j)}})\}=0, $ \\
    labiodental & -1    & 1     & \textcolor{blue}{\textit{2}}$^{\diamondsuit}$ & \small $ s_j=2, \ g_{j,\tau{j}}(F_1, F_2)=c \cdot 2 + (1-c) f(F_1^j, F_2^j)=f(F_1^j, F_2^j)=2$ \\
    coronal & -1    & -1    & \textbf{0} &  \\
    anterior & 0     & 0     & \textcolor{blue}{\textit{0}} &  \\
    distributed & 0     & 0     & \textcolor{blue}{\textit{0}} &  \\
    lateral & -1    & -1    & \textbf{0} &  \\
    dorsal & -1    & -1    & \textbf{0} &  \\
    high  & 0     & 0     & \textcolor{blue}{\textit{0}} &  \\
    front & 0     & 0     & \textcolor{blue}{\textit{0}} & \\
    \hline
    & & & \multicolumn{2}{l}{\textbf{total distance: 9}} \\
    \bottomrule
    \end{tabular}%
  \caption{An example of calculating the distance between [z] and [f] with our distance function. The distance between I-features (in bold) is calculated using general distance function $f(x_1, x_2)$, while the distance between D-features (in italic blue) is calculated using $g_{j,k}(F_1, F_2)$. The `total distance' is the sum of the distances across all dimensions. Details of the calculation are provided in the `Note' column.}
  \label{tab:exp-calcuate-dist}%
\end{table*}%

\begin{theorem} \label{proposition}
    %$g$ satisfies the triangle inequality: 
    $\forall F_1,F_2,F_3\in\Omega$, $\forall$ indices of paired D-feature and I-feature $j$ and $k$,
    \begin{equation}\label{eq:triangle}
    \setlen
        g_{j,k}(F_1,F_2)+g_{j,k}(F_1,F_3)\geqslant g_{j,k}(F_2,F_3).
    \end{equation}
\end{theorem}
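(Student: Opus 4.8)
The plan is to reduce the inequality to an elementary statement about the two scalar coordinates involved, $j$ and $k=\tau(j)$. Writing $h(c,\delta)=c\,s_j+(1-c)\delta$, every term is of the form $g_{j,k}(F_a,F_b)=h(c_{ab},\delta_{ab})$, where $\delta_{ab}=f(F_a^{j},F_b^{j})$ is the distance along the D-feature and $c_{ab}=\min\{f(F_a^{k},F_b^{k}),1\}$ is the capped distance along the I-feature. I abbreviate $\alpha=\delta_{12}$, $\beta=\delta_{13}$, $\gamma=\delta_{23}$ and $p=c_{12}$, $q=c_{13}$, $r=c_{23}$. First I would record the two ingredients I am entitled to use: since $f$ is itself a metric, its triangle inequality along coordinate $j$ gives $\gamma\le\alpha+\beta$, and by the definition of $s_j$ as a supremum over $\Omega$ we have $0\le\alpha,\beta,\gamma\le s_j$.

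The key auxiliary fact is that the capped coefficients satisfy their own triangle inequality, $r\le p+q$. I would obtain this by noting that $\phi(t)=\min\{t,1\}$ is nondecreasing, concave, and satisfies $\phi(0)=0$, hence is subadditive; composing the metric $f$ along coordinate $k$ with $\phi$ therefore again yields the triangle inequality, which is precisely $r\le p+q$. (One can also verify it directly by splitting on whether $f(F_2^{k},F_3^{k})$ exceeds $1$.) Note also that $r=\min\{\cdot,1\}\le 1$, so $1-r\ge 0$, i.e.\ $h$ is nondecreasing in its second argument.

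With these in hand the argument proceeds as follows. Because $1-r\ge0$ and $\gamma\le\alpha+\beta$, monotonicity of $h$ in $\delta$ gives the upper bound $g_{j,k}(F_2,F_3)=h(r,\gamma)\le h(r,\alpha+\beta)=r\,s_j+(1-r)(\alpha+\beta)$. It then suffices to show $h(p,\alpha)+h(q,\beta)-\bigl[r\,s_j+(1-r)(\alpha+\beta)\bigr]\ge0$. Expanding, this difference equals $(p+q-r)\,s_j+(r-p)\alpha+(r-q)\beta$. Since each of $\alpha,\beta$ lies in $[0,s_j]$, I would lower-bound each linear term by its minimum over that interval, $(r-p)\alpha\ge s_j\min\{0,r-p\}$ and likewise for $\beta$, which reduces the goal to the purely numerical inequality $(p+q-r)+\min\{0,r-p\}+\min\{0,r-q\}\ge0$ for $p,q,r\in[0,1]$ with $r\le p+q$.

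Finally I would settle this numerical inequality by a three-way case analysis on the position of $r$ relative to $p$ and $q$: when $r$ is the smallest the left side collapses to $r\ge0$; when $r$ is the largest it equals $p+q-r\ge0$ by the coefficient triangle inequality; and in the intermediate case it equals the smaller of $p$ and $q$, again nonnegative. The main obstacle I anticipate is exactly this bookkeeping, since the subtracted term $h(r,\gamma)$ carries the possibly negative coefficients $(r-p)$ and $(r-q)$; the proof therefore hinges on exploiting both $\alpha,\beta\le s_j$ and $r\le p+q$ simultaneously, and on not giving away too much when bounding $h(r,\gamma)$ from above. Everything else is routine substitution, and symmetry together with nonnegativity of $g_{j,k}$ has already been checked.
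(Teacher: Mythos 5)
Your proof is correct and follows essentially the same route as the paper's: both bound $f(F_2^j,F_3^j)$ by $f(F_1^j,F_2^j)+f(F_1^j,F_3^j)$, reduce the difference to the expression $(c_{12}+c_{13}-c_{23})\,s_j+(c_{23}-c_{12})f(F_1^j,F_2^j)+(c_{23}-c_{13})f(F_1^j,F_3^j)$ (your $(p+q-r)s_j+(r-p)\alpha+(r-q)\beta$), and finish with the same three-way case split on where $c_{23}$ sits relative to $c_{12}$ and $c_{13}$. The one point where you are more explicit is the subadditivity $c_{23}\leqslant c_{12}+c_{13}$ of the capped coefficients, obtained from the subadditivity of $\min\{\cdot,1\}$; the paper needs exactly this fact in its second case ($c_{23}>c_{12}$, $c_{23}>c_{13}$) but invokes it without comment.
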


\begin{proof}
    Let $c_{st}=\min\{f(X_s^k,X_t^k),1\}$. We have
    \begin{equation}
    \begin{aligned}
        \setlen
        \Delta=& g_{j,k}(F_1,F_2)+g_{j,k}(F_1,F_3)- g_{j,k}(F_2,F_3)\\
        =&(c_{12}+c_{13}-c_{23})\cdot s_j+(1-c_{12}) f(F_1^j,F_2^j)\\
        & +(1-c_{13}) f(F_1^j,F_3^j)-(1-c_{23}) f(F_1^j,F_2^j)\\
        \geqslant &(c_{12}+c_{13}-c_{23})\cdot s_j+(1-c_{12}) f(F_1^j,F_2^j)\\
        & +(1-c_{13})f(F_1^j,F_3^j) \\
        & -(1-c_{23})[f(F_1^j,F_2^j)+f(F_1^j,F_3^j)]\\
        = &(c_{12}+c_{13}-c_{23}) \cdot s_j-(c_{12}-c_{23}) f(F_1^j,F_2^j)\\
        & -(c_{13}-c_{23}) f(F_1^j,F_3^j)  \label{eq:need-simplify}
    \end{aligned}
    \end{equation}
    There are three cases: If $c_{23}<c_{12}$ and $c_{23}<c_{13}$, then
    \begin{equation*}
    \begin{aligned}
    \setlen
    \Delta & \geqslant (c_{12}-c_{23})[s_j - f(F_1^j,F_2^j)]\\
    & +(c_{13} - c_{23})[s_j-f(F_1^j,F_3^j)] \geqslant 0.
    \end{aligned}
    \end{equation*}
    If $c_{23}>c_{12}$ and $c_{23}>c_{13}$, then
    \begin{equation*}
    \begin{aligned}
    \setlen
     \Delta
        \geqslant (c_{12}+c_{13}-c_{23})\cdot s_j \geqslant0.
    \end{aligned}
    \end{equation*}
    If $c_{23}$ lies between $c_{12}$ and $c_{13}$, without loss of generality, assume $c_{12}\leqslant c_{23}\leqslant c_{13}$. Then,
    \begin{equation*}
    \begin{aligned}
    \setlen
        \Delta
        & \geqslant (c_{13}-c_{23})\cdot s_j-(c_{13}-c_{23}) f(F_1^j,F_3^j)\\
        & \geqslant( c_{13}-c_{23})[s_j-f(F_1^j,F_3^j)]\geqslant0.
    \end{aligned}
    \end{equation*}
\end{proof}

Table \ref{tab:exp-calcuate-dist} provides an example of calculating the distance between [m] and [f] using our distance function $d(F_1, F_2)$ defined in Eq. \ref{eq:dist-func}. The phonetic feature vectors of [m] and [f] are denoted $F_1$ and $F_2$, respectively. The general distance $f$ is set as $f(x_1, x_2)=|x_1-x_2|$.

\end{CJK}
\end{document}